\documentclass[year=26]{fmcad}
\usepackage[T1]{fontenc}
\usepackage{booktabs}
\usepackage{tikz,tcolorbox}
\tcbuselibrary{listings,breakable}
\usepackage{graphicx}
\usepackage{multirow}
\usepackage{makecell}
\usepackage{array}
\usepackage{amsmath,amsfonts,amssymb}
\usepackage{amsthm}
\newtheorem{theorem}{Theorem}
\newtheorem{lemma}[theorem]{Lemma}

\theoremstyle{definition}
\newtheorem{definition}[theorem]{Definition}
\newtheorem{example}[theorem]{Example}
\theoremstyle{remark}

\usepackage[inline]{enumitem}
\usepackage{wrapfig}
\usepackage{subcaption}
\usepackage{mathtools}
\usepackage{etoolbox}

\DeclarePairedDelimiterX\Tuple[1]\lparen\rparen{%
  #1}

\DeclarePairedDelimiterX\Set[1]\{\}{%
#1}

\DeclarePairedDelimiterX\of[1]\lparen\rparen{%
  #1}

\DeclarePairedDelimiterX\Bof[1]\lbrack\rbrack{%
  #1}

\DeclarePairedDelimiterX\abs[1]\lvert\rvert{\ifblank{#1}{\:\cdot\:}{#1}}

\newcommand{\graph}{\mathcal{G}}
\newcommand{\graphs}{\boldsymbol{\mathcal{G}}}
\newcommand{\edge}{\mathcal{E}}
\newcommand{\nodes}{\mathcal{V}}
\newcommand{\weights}{w}
\newcommand{\timevar}{\mathbb{T}}
\newcommand{\mas}{\mathcal{MA}}

\newcommand{\sysstate}{x}
\newcommand{\type}{\mathtt{type}} 
\newcommand{\settypes}{\mathcal{T}}
\newcommand{\Aff}{\mathrm{PWA}}

\newcommand{\agentstate}{\mathbf{x}}
\newcommand{\agentinput}{\mathbf{u}}
\newcommand{\pos}{\mathbf{p}}

\newcommand{\jointstate}{\mathbf{X}}
\newcommand{\jointcontrol}{\mathbf{U}}

\newcommand{\world}{\mathbf{w}}

\newcommand{\heading}{\theta}      

\newcommand{\Inop}{\mathrm{In}}
\newcommand{\Outop}{\mathrm{Out}}

\newcommand{\Globally}{\mathbf{G}}
\newcommand{\Until}{\mathbf{U}}
\newcommand{\Eventually}{\mathbf{F}}

\newcommand{\EX}{\mathbf{EX}}
\newcommand{\FA}{\mathbf{FA}}
\newcommand{\capvec}{\boldsymbol{\kappa}}

\newcommand{\gen}{\Gamma}

\newcommand{\mypara}[1]{%
  \vspace{0.3em}%
  \noindent\textit{#1.}\
}

\usepackage{xcolor}

\newcommand{\Be}{\mathbb{B}}

\newcommand{\Ne}{\mathbb{N}}

\newcommand{\Uc}{\mathcal{U}}
\newcommand{\Vc}{\mathcal{V}}
\newcommand{\Wc}{\mathcal{W}}
\newcommand{\Xc}{\mathcal{X}}

\newcommand{\worldmodel}{f} 

\newenvironment{runningex}[1]{%
  \par\addvspace{\topsep}%
  \trivlist\item[\hskip\labelsep\itshape
    Example~\ref{#1} (continued).]%
  \rmfamily
}{%
  \endtrivlist
}

\begin{document}

\usetikzlibrary{automata,positioning,arrows}
\newtcolorbox{jcomment}{
  colback=blue!10,
  colframe=blue!40!black,
  boxrule=0.5pt,
  arc=3pt,
  left=4pt,
  right=4pt,
  top=2pt,
  bottom=2pt,
  breakable
}

\title{Multi-Agent Planning with Spatio-Temporal and Topological Constraints using STL-GO}

\author{
  \IEEEauthorblockN{
    Sheryl Paul$^{1,*}$,\;
    Vidisha Kudalkar$^{1,*}$,\;
    Anand Balakrishnan$^{2}$,\;\\
    Lars Lindemann$^{3}$,\;
    Alberto Speranzon$^{4}$,\;
    Jyotirmoy V. Deshmukh$^{1}$
  }
  \IEEEauthorblockA{
    $^{1}$University of Southern California,
    Los Angeles, CA, USA \;
    \{sherylpa, kudalkar, jdeshmuk\}@usc.edu\\
    $^{2}$University of Texas at Austin,
    Austin, TX, USA \;
    anandbal@utexas.edu\\
    $^{3}$ETH Zurich,
    Zurich, Switzerland \;
    llindemann@ethz.ch\\
    $^{4}$Lockheed Martin,
    Eagan, MN, USA \;
    alberto.speranzon@lmco.com\\
    $^{*}$Equal contribution.
  }
}
\maketitle

\begin{abstract}
Multi-agent planning problems arise in a variety of engineering
applications, such as multi-robot wildfire fighting and unmanned aerial
inspection in factories.
A particular challenge is the existence of spatio-temporal (i.e., when and/or where an
agent should do what) and topological constraints (i.e., how agents should
interact), as typically formalized via the notion of graphs.
Over the last years, various frameworks have been proposed that can capture such
constraints via spatio-temporal logics.
We focus here on \emph{spatio-temporal logic with graph operators
(STL-GO)}, a recent formalism that supports reasoning about multiple agents
and their topologies, such as sensing, communication, and task topologies.
In this paper, we consider the problem of planning multi-agent paths that
satisfy constraints written in STL-GO.
This problem is particularly challenging due to the need of encoding
multiple, potentially time-varying graphs via the graph operators inherent
to STL-GO.
We present two encodings of this problem, one based on mixed-integer
programming (MIP) and another based on satisfiability modulo theory (SMT), with soundness guarantees.
We provide a unified interface for specifying agent constraints, their graph
topologies, and the STL-GO specification, enabling seamless use of both
methods and facilitating direct comparison between them.
We evaluate both encodings on a multi-UAV
search-and-rescue benchmark, ablating over team size and graph
complexity,
highlighting the expressiveness of the proposed encodings under dynamic
multi-graph interactions.
\end{abstract}

\begin{IEEEkeywords}
  Spatio-Temporal Logics, Planning, Multi-Agent Systems
\end{IEEEkeywords}

\section{Introduction}
Classical temporal logic specifications, such as LTL or STL, are well suited for
expressing time-based properties of individual system trajectories. However, in
multi-agent systems (MAS), desired mission behavior may depend not only on when
events occur but also on inter-agent communication, spatial relationships, and
task dependencies. Furthermore, mission objectives may need to reason about how
these inter-agent relations evolve over time. A useful model is thus to treat a
multi-agent system as a collection of directed or undirected graphs, where nodes
represent agents with dynamic behavior and time-varying edges across multiple
graphs capture distinct kinds of inter-agent relationships.
\begin{figure}[t]
    \setlength{\belowcaptionskip}{-1.33em}
    \centering
    \includegraphics[width=0.8\columnwidth]{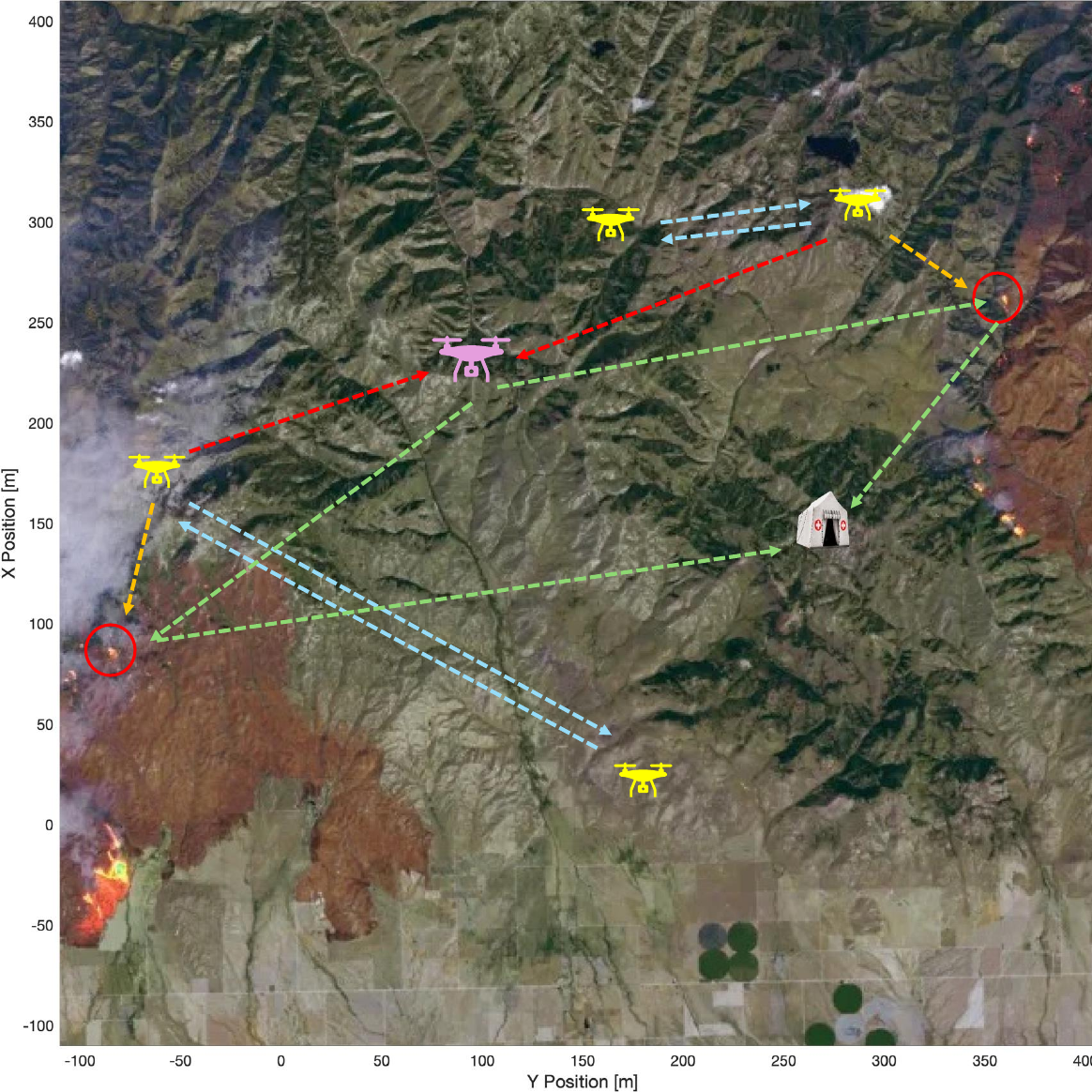}
    \caption{\footnotesize
        Motivation scenario: a heterogeneous multi-agent system coordinating
        wildfire response over a satellite terrain map.
        \textbf{Yellow drones} are \emph{locator} agents that patrol the region
        to monitor fire spread and detect emergencies (marked by red circles).
        The \textbf{purple drone} is the \emph{rescuer} agent tasked with
        reaching survivors and transporting them to the rescue center
        (white tent).
        \textcolor{orange}{\textbf{Orange arrows}} indicate
        \emph{sensing}: a locator drone detects an emergency site.
        \textcolor{cyan}{\textbf{Blue arrows}} represent
        \emph{inter-agent communication} links through which locators share
        situational awareness.
        \textcolor{red}{\textbf{Red arrows}} denote
        \emph{task assignment}: the detected emergency is assigned to the
        rescuer.
        \textcolor{green!60!black}{\textbf{Green arrows}} trace the
        \emph{rescue path}: the rescuer navigates first to the emergency
        location, then to the rescue center.
        Our goal is to synthesize agent trajectories that satisfy a
        \emph{spatio-temporal reachability specification} encoding these
        coordination requirements with formal correctness guarantees.
    }
    \label{fig:motivation}
\end{figure}

Recent work has focused on spatio-temporal logic formalisms such as SSTL
\cite{bortolussi2014specifying,nenzi2015qualitative}, SaSTL \cite{ma2020sastl},
SpaTeL \cite{haghighi2015spatel}, STREL~\cite{STREL,STRELDynamicNetworks},
Census STL \cite{xu2016census} and STL-GO \cite{stlgo}. Among these formalisms,
STL-GO allows specifying multi-agent behaviors over multiple inter-agent
relational structures: it supports simultaneous quantification over distinct
time-varying agent relationships. As a motivational example
(Fig.~\ref{fig:motivation}), consider a wildfire-response setting in which:
``\emph{Every emergency situation must be sensed by a locator agent within
bounded time; upon sensing, the locator must communicate the emergency to
connected agents and upon contact with a rescuer, assign that rescuer to the
emergency. The rescuer must then reach the emergency location and perform the
rescue action and return to a safe location.}'' This specification reasons over
three distinct topologies at once (sensing, communication, and task-assignment),
and also constrains the number of neighbors satisfying a desired property. Such
types of specifications are a first step to formalizing the behavior of
practical multi-agent systems such as those used for search-and-rescue,
environmental monitoring under limited communication, and distributed sensing in
uncertain terrains.

STREL provides path-based reachability and escape operators over a chosen
dynamic weighted spatial model, while Census STL counts agents within a
population; neither directly combines STL-GO's typed neighborhood-cardinality
operators with explicit quantification over collections of interaction graphs.
Hyperproperty logics such as
HyperLTL~\cite{hsu2025hyprl,wang2020hyperproperties,finkbeiner2023logics} can
also express relational properties across agents, but require lifting
agent-level quantifiers outside any temporal operator. 
Consequently,
temporal changes in agent sets and interaction relations must be represented
through explicit propositions and finite-domain expansion. We use HyperLTL as
a relational specification comparison grounded in prior planning work, rather
than as a general-purpose baseline for multi-agent systems in Section~\ref{sec:conc}.

\noindent In prior work \cite{stlgo}, the authors focus on runtime monitoring of STL-GO
specifications, assuming that agents' spatio-temporal behaviors are decided
by some given planner; how such plans can be synthesized is not addressed.
The main problem we consider in this paper is open-loop, centralized, bounded-horizon
planning for a multi-agent system subject to STL-GO specifications where the agent and environment dynamics are deterministic and known. We
focus on this case as a foundational baseline: a tractable
centralized encoding is a prerequisite for decentralized extensions and,
to our knowledge, no such encoding exists for STL-GO.

\noindent There has been substantial work on planning multi-agent systems subject to temporal logic specifications such as LTL \cite{OnlineMultiRobotLTL,SMTMultiRobotSafeLTL}, STL~\cite{FormalMethodsMultiAgent,MultiAgentSTLWaypoints}, ATL~\cite{ATL} and CaTL~\cite{CaTLPlus}. Existing work can be broadly categorized as follows: \emph{constraint-solving/SMT-based synthesis}, which encodes temporal logic constraints symbolically and reasons about feasibility or correctness using SAT/SMT solvers \cite{shoukry2016scalable,shoukry2017linear}; \emph{Mixed-Integer Programming (MIP)} approaches, which formulate motion planning and control synthesis under temporal logic constraints as optimization problems over continuous dynamics and binary decision variables \cite{SMTMultiRobotSafeLTL,MultiAgentSTLWaypoints}; \emph{reactive synthesis}, which focuses on strategy synthesis and correctness guarantees in adversarial or game-theoretic settings, commonly using ATL or related formalisms \cite{ATL}; and \emph{learning-based solvers}~\cite{NNSTREL,formats}.

Inspired by prior work on SMT- and MIP-based planning, we present SMT and MIP
encodings of the centralized planning problem under STL-GO specifications. A key
difference from prior encoding methods is that we explicitly handle weighted,
time-varying interaction graphs whose structure is induced by the joint state of
agents and environment; this requires encoding multi-graph quantification and
neighborhood-cardinality predicates as solver constraints.

\noindent \textit{Contributions.}
(i) We present MIP and SMT encodings of STL-GO that support multi-graph
    existential and universal quantification and neighborhood-cardinality
    predicates over time-varying interaction graphs, with soundness
    guarantees. 
    (ii) We provide a unified interface for specifying agents, interaction graphs, STL-GO formulas, and an objective function that compiles to MIP/SMT-encoded plans, also enabling direct
    empirical comparison.
(iii) We empirically evaluate the encodings on a multi-UAV search-and-rescue benchmark
    using Gurobi and Z3, ablating over team size and interaction-graph complexity.
   (iv) We additionally evaluate the encodings on a structurally
different grid-world benchmark adapted from HypRL
\cite{hsu2025hyprl}. The results compare solve time and encoding size across
the MIP and SMT backends.
    We further show that the
    same specifications, encoded in HyperLTL, incur either a linear
    blow-up in formula size or an alternation in the quantifier prefix,
    motivating STL-GO's pointwise agent-level quantification.

The remainder of the paper is organized as follows. Section~\ref{sec:preliminaries} introduces the multi-agent system model, interaction graphs, and STL-GO. Section~\ref{sec:problem-statement} formalizes the bounded-horizon planning problem for multi-agent systems subject to STL-GO specifications. Sections~\ref{sec:stlgo-milp} and~\ref{sec:stlgo-smt} present our solver-based synthesis approaches, describing the translation of STL-GO specifications into MIP and SMT encodings, respectively. We evaluate both approaches through simulations and present our results in Section~\ref{sec:exp}, and discuss related work and conclusions in Section~\ref{sec:conc}.

\section{Preliminaries}
\label{sec:preliminaries}


\subsection{Multi-Agent System Model}

Let \(\nodes = \{1,\ldots,N\}\) be a set of agents whose spatio-temporal
behavior evolves over a discrete time domain \(\timevar \subset \Ne\). We
consider a homogeneous MAS in which each agent \(i \in \nodes\) has a state
vector \(\agentstate^i_t \in \Xc\), at time $t$ where \(\Xc\) is the shared state space. The
state variables encode attributes local to an agent, such as physical
configuration and internal resources.
The joint state of all agents at time \(t \in \timevar\) is denoted
\(\jointstate_t = (\agentstate_t^1, \ldots, \agentstate_t^N) \in \Xc^N\). While we focus on the homogeneous setting for
notational clarity, the encoding extends to role-typed heterogeneity, as
used in the search-and-rescue example below.

The MAS environment is modeled as a discrete-time dynamical system with state
\(\world_t \in \Wc\), where \(\Wc\) encodes attributes such as map geometry
(static or dynamic), obstacles, and adversarial features (e.g., communication
disruption). The environment evolves deterministically according to \(\world_{t+1} =
\worldmodel(\world_t)\), where \(\worldmodel\) and the initial world state
\(\world_0\) are known. Thus, the finite trajectory
\(\{\world_t\}_{t=0}^{T}\) is fixed when a planning instance is constructed.\footnote{Real-world environments may exhibit
stochasticity, in which case a synthesized plan can only be guaranteed to
satisfy the specification with high probability. We do not address this setting
here, but the tools developed in this paper can serve as a building block for
stochastic extensions.}

Each agent is governed by a transition dynamics function \(F\) that is shared
across all agents (since the MAS is homogeneous).
At each time \(t\), agent \(i\) selects a control input \(\agentinput_t^i \in
\Uc\) from an admissible input domain \(\Uc\), and its successor state is
\begin{equation}
  \agentstate_{t+1}^i = F(\agentstate_t^i, \agentinput_t^i, \world_t).
\end{equation}
Stacking the componentwise dynamics gives the joint successor state
\begin{equation}
  \jointstate_{t+1} = F(\jointstate_t, \jointcontrol_t, \world_t),
  \label{eq:agent-dyn}
\end{equation}
where \(\jointcontrol_t = (\agentinput_t^1, \ldots, \agentinput_t^N)\) is the joint control input.

\begin{example}
  \label{example:running}
  Consider a system of \(\Vc = \left\{ 1, \ldots, N \right\}\) robots, where the state of agent \(i \in \Vc\) is
  $\agentstate^i_t := \left(\pos^i_t, \heading^i_t, \capvec^i_t\right)$
  where: $\pos^i_t = [x^i_t,\,y^i_t]^\top \in \mathbb{R}^2$ is the 
  position of the agent, $\heading^i_t \in [0,2\pi)$ is the orientation, and $\capvec^i_t$ can refer to a collection of time-varying resources and capabilities such as the battery level.  
  At each time \(t\), each agent selects $u^i_t := (v^i_t, \omega^i_t) \in \Uc$ with $v^i_t \in
[v_{\min}, v_{\max}]$ and $\omega^i_t \in [\omega_{\min}, \omega_{\max}]$
the linear and angular velocities. For sampling time $\Delta t > 0$ and a
unicycle model,\footnote{The unicycle dynamics in this example are illustrative. The MIP and SMT encodings and the reported experiments use discrete-time affine motion dynamics. Accommodating the displayed nonlinear model would instead require a piecewise-affine approximation or an SMT theory of nonlinear real arithmetic.} position and orientation update as
$x^i_{t+1} = x^i_t + \Delta t\, v^i_t \cos(\theta^i_t)$,
$y^i_{t+1} = y^i_t + \Delta t\, v^i_t \sin(\theta^i_t)$,
$\theta^i_{t+1} = \mathrm{wrap}_{[0,2\pi)}(\theta^i_t + \Delta t\, \omega^i_t)$.
\end{example}


To encode the rich interaction and potential coupling between each agent,
its perception of the world state, and their effect on decision making in
the system, we define the notion of \emph{interaction graphs}
\cite{stlgo}.

\begin{definition}[Interaction Graph]
  An interaction graph \(\graph^{\type}_t\) is a directed and weighted
  graph
  \(
  \graph^{\type}_t := (\nodes,\edge^{\type}_t,\weights^{\type}_t),
  \)
  where $\edge_t^{\type}\subseteq\nodes\times\nodes$ is the edge set, and
$\weights_t^{\type}:\edge_t^{\type}\to\mathbb{R}_{\ge 0}$ assigns edge attributes
  (e.g., distance, cost, signal quality).
  Different interaction modalities are modeled by distinct graph types \(\type \in
  \settypes := \left\{ \type_1, \ldots, \type_M \right\}\).
  The collection of all interaction graphs at time $t$ is
  \(\graphs_t := \{\graph_t^{\type} \mid \type\in\settypes\}\).
\end{definition}


\begin{example}
Examples of interaction graphs include:
(i) A \textit{distance graph} \(\graph_t^d = \left(
          \nodes,\edge_t^d, w_t^d \right)\) is a complete directed graph where
          \(\edge_t^d = \nodes \times \nodes \setminus \{(i,i) \mid i \in \Vc\}\)
          (all ordered pairs where \(i \neq j\)) and \(w_t^d\) is the distance
          between agents \(i\) and \(j\) at time \(t\). (ii) A \textit{sensing graph}
           $\graph^{s}_t=(\nodes,\edge^s_t,\weights^s_t)$ encodes whether agent
           $i$ can sense agent $j$ at time $t$ and $(i,j)\in\edge^s_t$. (iii) A \textit{communication graph}  $\graph_t^c = (\nodes,\edge^c_t,\weights^c_t)$ where $(i,j)\in\edge^c_t$ if and only if agent $i$ can communicate with agent $j$. (iv) A \textit{task-dependency graph} 
\(
\graph^{\mathrm{task}}_t = (\nodes,\edge^{\mathrm{task}}_t,\weights^{\mathrm{task}}_t),
\)
indicates that agent $j$ depends on agent $i$ for task execution, where \((i, j) \in \edge^{\mathrm{task}}_t\).
\end{example}

\subsection{Spatio-Temporal Logic with Graph Operators (STL-GO)}

STL-GO \cite{stlgo} extends Signal Temporal Logic (STL) \cite{stl-dejan} with graph operators to enable reasoning about both spatio-temporal and \emph{topological} relationships between agents.
The syntax and semantics of STL-GO are split into agent-local formulas and multi-agent compositional formulas.

\subsubsection{Agent-Local Formulas}
For formulas that specify behavior local to the perspective of a single agent, we use the recursive syntax: 
\begin{displaymath}
    \varphi ::= \top \mid \mu_x \mid \neg \varphi \mid 
\varphi \wedge \varphi \mid 
\varphi \, \Until_I \, \varphi \mid 
\Inop_{\graphs,E}^{W,\#}\varphi \mid
\Outop_{\graphs,E}^{W,\#}\varphi.
\end{displaymath}
Here, \(\mu_x\) represents an atomic predicate of the form
\(\mu_x:\Xc\to\Be\) that maps the state of an agent to a Boolean value;
logical negation \(\neg\varphi\) and logical conjunction
\(\varphi\land\varphi\) are defined as usual; and \(\Until_I\) is the until
operator over an interval \(I=[a,b]\) as defined in STL
\cite{stl-dejan}.\footnote{We also use the standard derivations
\(\varphi_1\lor\varphi_2:=\neg(\neg\varphi_1\land\neg\varphi_2)\),
\(\varphi_1\Rightarrow\varphi_2:=\neg\varphi_1\lor\varphi_2\),
\(\Eventually_I\varphi:=\top\,\Until_I\,\varphi\), and
\(\Globally_I\varphi:=\neg\Eventually_I\neg\varphi\).}

STL-GO introduces the \emph{incoming}
\(\Inop_{\graphs,E}^{W,\#}\) and \emph{outgoing}
\(\Outop_{\graphs,E}^{W,\#}\) graph operators, where
\(W=[w_1,w_2]\subseteq\mathbb{R}\) constrains the edge weights. The
cardinality constraint $E$ is specified by endpoints
\(e_1\in\mathbb{N}\), \(e_2\in\mathbb{N}\cup\{\infty\}\), with
\(e_1\leq e_2\), and is defined as
\(E=[e_1,e_2]_{\mathbb N}:=\{k\in\mathbb{N}\mid e_1\leq k\leq e_2\}\).
Thus, \(E=[1,\infty)_{\mathbb N}\), for example, requires at least one
qualifying edge. Finally, \(\#\in\{\exists,\forall\}\) denotes existential
or universal quantification over the graph types in \(\settypes\), equivalently
over the graph instances in \(\graphs_t\).

Let \(\mas\) denote the finite execution induced by the multi-agent system,
the environment, and the chosen control sequence. We write
\((\mas,i,t)\models\varphi\) to denote Boolean satisfaction of the
agent-local formula \(\varphi\) at agent \(i\) and time \(t\).

Graph operators allow an agent to reason about the trajectories of its
neighboring agents. The incoming operator
\(\Inop_{\graphs,E}^{W,\exists}\varphi\) asserts that there exists at least
one graph instance \(\graph_t^{\type}\in\graphs_t\) for which the count of
incoming edges \((j,i)\) to agent \(i\) satisfying both
\(\weights_t^{\type}(j,i)\in W\) and
\((\mas,j,t)\models\varphi\) lies in \(E\). The universal version
\(\Inop_{\graphs,E}^{W,\forall}\varphi\) requires the same property to hold
across all graph instances \(\graph_t^{\type}\in\graphs_t\). Similarly,
the outgoing operator \(\Outop_{\graphs,E}^{W,\exists}\varphi\) states that
there exists a graph instance for which the analogous count over outgoing
edges \((i,j)\) from agent \(i\) lies in \(E\). If weights are not of
interest, we set \(W:=(-\infty,\infty)\) and write the simplified forms
\(\Inop_{\graphs,E}^{\#}\varphi\) and
\(\Outop_{\graphs,E}^{\#}\varphi\).

Formally, writing \(t\oplus I:=\{t+\tau:\tau\in I\}\), we define the
recursive semantics below.\footnote{We use strong bounded-horizon semantics:
if \(t+b>T\), an until or eventually obligation over \(I=[a,b]\) cannot be
discharged within the encoded horizon and is therefore false. Because
\(\Globally_I\varphi:=\neg\Eventually_I\neg\varphi\), the derived globally
formula is true in this boundary case.}

\begin{displaymath}
\scalebox{0.91}{$
\begin{array}{l@{\;}l}
(\mas,i,t)\models\top
&\text{always},\\
(\mas,i,t)\models\mu_x
&\text{iff }\mu_x(\agentstate_t^i),\\
(\mas,i,t)\models\neg\varphi
&\text{iff }(\mas,i,t)\not\models\varphi,\\
(\mas,i,t)\models\varphi_1\land\varphi_2
&\text{iff }(\mas,i,t)\models\varphi_1\\
&\quad\land(\mas,i,t)\models\varphi_2,\\
(\mas,i,t)\models\varphi_1\,\Until_{[a,b]}\,\varphi_2
&\text{iff }t+b\leq T, \exists t'\in t\oplus[a,b]\text{ s.t. }\\
&\quad(\mas,i,t')\models\varphi_2\\
&\quad\land\ \forall t''\in[t,t').\,
(\mas,i,t'')\models\varphi_1,\\
(\mas,i,t)\models\Inop_{\graphs,E}^{W,\#}\varphi
&\text{iff }
\#\,\type\in\settypes\ \text{s.t.}\\
&\quad\left|
\left\{
(j,i)\in\edge_t^{\type}:
\weights_t^{\type}(j,i)\in W,\right.\right.\\
&\quad\left.\left.
(\mas,j,t)\models\varphi
\right\}
\right|\in E,\\
(\mas,i,t)\models\Outop_{\graphs,E}^{W,\#}\varphi
&\text{iff }
\#\,\type\in\settypes\ \text{s.t.}\\
&\quad\left|
\left\{
(i,j)\in\edge_t^{\type}:
\weights_t^{\type}(i,j)\in W,\right.\right.\\
&\quad\left.\left.
(\mas,j,t)\models\varphi
\right\}
\right|\in E.
\end{array}
$}
\end{displaymath}
\subsubsection{Multi-Agent Formulas}

STL-GO uses the following recursive grammar to define properties over multiple agents:
$$
\phi ::= \top \mid \mu \mid i.\varphi \mid \neg \phi \mid \phi \wedge \phi
\mid \phi \Until_I \phi
\mid \FA\,\varphi \mid \EX\,\varphi,
$$
where $\varphi$ is an agent-local formula, and \(\mu\) (without the subscript
\(x\)) denotes an atomic predicate of the form
\(\mu:\Xc^{\abs{\nodes}}\times\Wc\to\Be\). Such predicates are analogous to
the agent-local predicates \(\mu_x\), but are defined over the joint agent and
world state \((\jointstate_t,\world_t)\).

Multi-agent formulas allow properties to be specified across agents using
logical connectives and over time using temporal operators. The operator
\(i.\varphi\) embeds an agent-local formula into a multi-agent formula. The
operators \(\FA\) and \(\EX\) denote universal and existential
quantification, respectively, over the full agent set \(\nodes\), with
\(\FA\varphi:=\bigwedge_{i\in\nodes}i.\varphi\) and
\(\EX\varphi:=\bigvee_{i\in\nodes}i.\varphi\).

Formally, we write \((\mas,t)\models\phi\) to denote that the STL-GO formula
\(\phi\) is satisfied by the finite execution \(\mas\) at time \(t\). Its
semantics are defined inductively as follows:
\begin{displaymath}
\begin{array}{l@{ }l}
(\mas,t)\models\top
&\text{always},\\
(\mas,t)\models\mu
&\text{iff }\mu(\jointstate_t,\world_t),\\
(\mas,t)\models i.\varphi
&\text{iff }(\mas,i,t)\models\varphi,\\
(\mas,t)\models\neg\phi
&\text{iff }(\mas,t)\not\models\phi,\\
(\mas,t)\models\phi_1\land\phi_2
&\text{iff }(\mas,t)\models\phi_1
\!\land\!(\mas,t)\!\models\!\phi_2,\\
(\mas,t)\models\phi_1\,\Until_{[a,b]}\,\phi_2 \,
&\text{iff }t+b\leq T\text{ and }\exists t'\in t\oplus[a,b]\\
&\quad\text{s.t. }
(\mas,t')\models\phi_2\\
&\quad\land\ \forall t''\in[t,t').\,
(\mas,t'')\models\phi_1,\\
(\mas,t)\models\FA\varphi
&\text{iff }\forall i\in\nodes.\,(\mas,i,t)\models\varphi,\\
(\mas,t)\models\EX\varphi
&\text{iff }\exists i\in\nodes.\,(\mas,i,t)\models\varphi.
\end{array}
\end{displaymath}
\begin{runningex}{example:running}
In our example, let us consider the UAVs to be assigned a search-and-rescue mission with agents being assigned one of two roles: `locators' $\mathcal{L} \subseteq \nodes$ and `rescuers' $\mathcal{R} \subseteq \nodes$.

Emergency events may arise at any site in a fixed set of candidate locations and must be detected by the locator
agents and assigned to one or more rescuers. The rescuer must resolve the emergency within bounded time, i.e., reach the emergency location and carry the rescued individual to the rescue center.
Let $\mathcal{C}\subset\mathbb{R}^2$ denote the rescue center and $\mathcal{M}$ a fixed set of possible emergency sites, with each $m \in \mathcal{M}$ having position $\pos^m_t \in \mathbb{R}^2$. Let $\varepsilon_E,\varepsilon_C>0$ be distance tolerances for reaching an emergency and the rescue center, respectively.
For $\ell\in\mathcal{L}$, $r\in\mathcal{R}$, and $m\in\mathcal{M}$, we define the
atomic predicates 
$\varphi^{\mathrm{emg}}_{m}(t) \iff \text{emergency $m$ is active at $t$}$,
$\varphi^{\mathrm{near}}_{r,m}(t) \iff \|\pos^r_t-\pos^m_t\|\le \varepsilon_E$,
$\varphi^{\mathrm{atC}}_r(t)
\iff \mathrm{dist}(\pos^r_t,\mathcal{C})\le \varepsilon_C$,
$\varphi^{\mathrm{carry}}_r(t)
\iff \text{rescuer $r$ carries a rescued individual}$.
Let $\varphi^{\mathrm{task}}_{\ell,r,m}(t)$ denote that locator $\ell$
assigns rescuer $r$ specifically to emergency $m$ at time $t$.
Let $E_{\geq1}:=[1,\infty)_{\mathbb N}$. The emergency-sensing predicate is a joint geometric predicate, while the
remaining predicates below are agent-local predicates derived using the STL-GO
graph operators:\footnote{For the communication predicates, the experimental implementation uses role-partitioned restrictions of the communication graph: $\varphi^{\mathrm{LL}}_\ell$ restricts target neighbors to locators, whereas $\varphi^{\mathrm{LR}}_\ell$ restricts target neighbors to rescuers. The compact notation below suppresses these target-role restrictions.}
\begin{align*}
  \varphi^{\mathrm{sense}}_m(t)    &:=
    \bigvee_{\ell\in\mathcal L}
    \left(\|\pos_t^\ell-\pos_t^m\|\le r_{\mathrm{sense}}\right)\\
  \varphi^{\mathrm{LL}}_\ell(t)    &:= \Outop_{\{\graph^{c}\},E_{\geq1}}^{\exists}\,\top
    \\
  \varphi^{\mathrm{LR}}_\ell(t)    &:= \Outop_{\{\graph^{c}\},E_{\geq1}}^{\exists}\,\top
\end{align*}
We encode the following specification: Once an emergency is detected, at least one locator must assign a rescuer within
        bounded time $T_{\mathrm{assign}}$:
 \[
 \resizebox{\linewidth}{!}{$\displaystyle
          \phi_{\mathrm{assign}}
          := \Globally
          \bigwedge_{m\in\mathcal{M}}
          \left(
          \varphi^{\mathrm{emg}}_m \wedge \varphi^{\mathrm{sense}}_m
          \Rightarrow
          \Eventually_{[0,T_{\mathrm{assign}}]}
          \bigvee_{\ell\in\mathcal L}\;
          \bigvee_{r\in\mathcal R}
          \varphi^{\mathrm{task}}_{\ell,r,m}
          \right).
 $}
	   \]
\label{ex:running}
\end{runningex}

\section{Problem Statement}
\label{sec:problem-statement}
In this paper, we are interested in an \emph{open-loop planning}\footnote{Here, ``open-loop'' refers to the finite control sequence synthesized along each realized execution branch. In the experiments, multiple such sequences are synthesized jointly to form a finite contingency tree; observations select among precomputed branches, while the selected branch itself is executed open loop.} or
\emph{bounded synthesis} problem for multi-agent systems, that is, our goal
is to synthesize a finite sequence of control inputs to a system of multiple
homogeneous agents over a bounded horizon, such that they satisfy a formal
specification.

More concretely, we consider a system of agents
\(\nodes=\Set*{1,\ldots,N}\) with state space \(\Xc\) and joint input
\(\jointcontrol_t
=(\agentinput_t^1,\dots,\agentinput_t^N)\in\Uc^N\), operating in a world
with state space \(\Wc\).
Each agent evolves in discrete time according to a known, deterministic,
homogeneous dynamics function
\(F:\Xc\times\Uc\times\Wc\to\Xc\). For the encodings presented in this
paper, we restrict \(F\) to be affine in the agent state, control input, and
world state.
We assume that the initial joint state \(\jointstate_0\) and the finite world
trajectory \(\{\world_t\}_{t=0}^{T}\) are fixed and known when the planning
instance is constructed.

Let $\phi$ be an STL-GO formula interpreted over the MAS $\mas$ under the
induced graph-collection sequence $\{\graphs_t\}_{t=0}^T$ over a bounded horizon \(T \in \Ne\).
Then, our goal is to synthesize an open-loop control sequence
$\{\jointcontrol_t\}_{t=0}^{T-1}$ such that the resulting execution
$\{\jointstate_t\}_{t=0}^{T}$ according to the dynamics in
\autoref{eq:agent-dyn} satisfies the specification at the initial state, i.e., $ (\mas,0)\models \phi$.
One may optionally minimize a performance objective
\(J(\{\jointstate_t\}_{t=0}^{T},
\{\jointcontrol_t\}_{t=0}^{T-1})\) (e.g., control effort, total
path length, etc.), yielding an \emph{optimal} bounded synthesis problem.

We identify a class of multi-agent systems
with state-dependent interaction graphs characterized by \emph{graph constructor functions}, for which the bounded
synthesis can be framed as an optimization-based or satisfaction problem
without much change in how the problem itself is encoded.
\begin{definition}[Graph Constructor Function]
\label{def:graph-constructor}
For a given interaction modality \(\type\in\settypes\), a graph constructor
function \(\gen^{\type}\) maps an element of
\(\Xc^{|\nodes|}\times\Wc\) to a directed, weighted graph on \(\nodes\).
At time \(t\), we write
\(\graph_t^{\type}
=\gen^{\type}(\jointstate_t,\world_t)
=(\nodes,\edge_t^{\type},\weights_t^{\type})\).
Thus, for every ordered pair \((i,j)\), the constructor determines whether
\((i,j)\in\edge_t^{\type}\) and, when the edge exists, its weight
\(\weights_t^{\type}(i,j)\).
\end{definition}
\begin{example}
     Let $\type = d$ denote a distance-based interaction modality. $\Gamma^{d}(\jointstate_t,\world_t)$ maps to the
function $(i,j) \mapsto d_1(\agentstate^i_t, \agentstate^j_t)$, where $\agentstate^i_t$
and $\agentstate^j_t$ denote the 2D (or 3D) coordinates of agents $i$ and $j$ and $d_1$ is the $\ell_1$ distance.\footnote{$d_1(a, b) = \sum_{k=1}^{n} |a^k - b^k|$ where $k$ indexes the $n$ spatial dimensions.}
\end{example}

Under the determinism and homogeneity assumptions above, and restricting
$\gen^\type$ to an encodable class (made precise in Sections~\ref{sec:stlgo-milp} and~\ref{sec:stlgo-smt}), the bounded synthesis problem can be
transformed into a finite set of constraints suitable for satisfiability
checking (e.g. SMT), or optimization (e.g. MIP), by introducing:
\begin{enumerate*}[label={(\roman*)}]
  \item decision variables for
        \(\{\agentstate_t^i\}_{t=0}^{T}\) and
        \(\{\agentinput_t^i\}_{t=0}^{T-1}\),
  \item auxiliary binary variables encoding the truth of various subformulas
        of $\phi$ for each agent and time instant
  \item constraints enforcing the dynamics, graph construction predicates,
        and the STL-GO semantics of Boolean, temporal, and graph operators
        over the bounded horizon.
\end{enumerate*}
While we focus on homogeneous agent dynamics, the encoding extends to role-typed heterogeneity, as used in the locator/rescuer split of the running example.

\mypara{Assumptions} We focus on \emph{centralized}, \emph{open-loop} planning under
\emph{deterministic} environment dynamics. Decentralized planning under partial
observability and local observation is the ultimate target, but we consider the centralized, fully observable setting as a foundational step; a tractable encoding in this setting will serve as a prerequisite for decentralized extensions which we relegate to future work. Additionally, we assume that agents execute the
synthesized open-loop plan \emph{autonomously}, and the deterministic environment
dynamics let us reason about feasibility and correctness without
probabilistic semantics.

In the following sections we present a systematic modeling and synthesis
framework that compiles the planning problem above, with its multiple interaction graphs, into mixed-integer programs or satisfiability problems for centralized planning.
Due to space constraints, we will introduce the novel graph-related encodings, and refer the reader to the complete encodings described in the
Appendices~\ref{sec:appendix-MILP-encoding} and~\ref{sec:appendix-SMT-encoding}.

\section{MIP-based Encoding for STL-GO Planning}
\label{sec:stlgo-milp}

\subsection{Encoding System Specifications}
\label{sec:milp-encodable-graphs}

\noindent \mypara{System Dynamics}
We restrict ourselves to deterministic, homogeneous dynamics that are affine in the agent
state, agent input, and world state so that the bounded synthesis problem admits
an MIP encoding.
Specifically, let the dynamics be
\begin{align}
\label{eq:affine-agent-dyn}
\agentstate^{i}_{t+1}
=
F(\agentstate^{i}_{t},\agentinput^{i}_{t},\world_t)
:=
    A \agentstate^{i}_{t} + B \agentinput^{i}_{t} + E \world_t + c,
\end{align}
where \( i\in\nodes,\ \ t=0,\ldots,T-1, \)
and $A,B,E$ and $c$ have appropriate dimensions. 

\mypara{State and input constraints} We assume agent states and control inputs to lie within hyper-rectangular sets, so that they are bounded componentwise as:
\begin{equation}
\label{eq:state_input_constraints}
x_{\min} \leq \agentstate^i_t \leq x_{\max},  \text{  and  } u_{\min} \leq \agentinput^i_t \leq u_{\max}, \quad \forall i, t.
\end{equation}

\noindent \mypara{Interaction Graphs} At each time $t$, interaction graphs are constructed from the joint agent state
$\jointstate_t\in\mathcal{X}^N$ and world state $\world_t\in\mathcal{W}$ via the graph constructor function $\gen^{\type}$ of Definition~\ref{def:graph-constructor}.
For each ordered pair $(i,j)\in\nodes\times\nodes$, $i\neq j$, the existence of a
directed edge $(i,j)$ is represented by a Boolean variable, while edge weights
are real-valued. Specifically, write
\(
\eta_{i,j}^{\type}:\mathcal{X}^{|\nodes|}\times\mathcal{W}\to\Be
\)
for the edge-existence predicate and
\(
e_{i,j}^{\type}:\mathcal{X}^{|\nodes|}\times\mathcal{W}\to\mathbb{R}_{\geq0}
\)
for the edge-weight expression. Then
\((i,j)\in\edge_t^{\type}\) if and only if
\(\eta_{i,j}^{\type}(\jointstate_t,\world_t)\) holds, and an existing edge
has weight
\(\weights_t^{\type}(i,j)=e_{i,j}^{\type}(\jointstate_t,\world_t)\).

\noindent To ensure compatibility with solver-based synthesis, we restrict
attention to graph constructors whose edge-existence predicates admit exact
mixed-integer encodings and whose edge-weight functions are piecewise affine
in $(\jointstate_t,\world_t)$.
Let $\Aff(\mathcal{X}^{|\nodes|} \times \mathcal{W})$ denote the set of 
piecewise-affine functions over the joint agent and world state.
We say that a graph constructor $\gen^{\type}$ is MIP-encodable if, for every
ordered pair $(i,j)$, $\eta_{i,j}^{\type}$ is a Boolean combination of affine
comparisons with an exact mixed-integer representation and
$e_{i,j}^{\type}\in\Aff(\mathcal{X}^{|\nodes|}\times\mathcal{W})$.
The encoding introduces $a_{i,j,t}^{\type}\in\{0,1\}$ with
$a_{i,j,t}^{\type}=1$ if and only if
$\eta_{i,j}^{\type}(\jointstate_t,\world_t)$ holds. Atomic affine comparisons
are encoded using two-sided Big-$M$ constraints with valid bounds and the
required numerical separation margins; Boolean connectives are translated
into linear constraints over binary variables.

\subsection{Encoding Agent-Local Operators}
\label{sec:milp-agent-local}
\mypara{Logical and Temporal Operators} The encodings of atomic predicates, logical operators, and temporal operators
closely follow established formulations in the literature~\cite{stl-to-milp1,stl-to-milp2}.
They are included in Appendix~\ref{sec:appendix-MILP-encoding} for completeness and correctness.

\noindent\mypara{Graph Operator Encodings}
For agent $i$ at time $t$, the incoming operator 
$\psi = \Inop_{\graphs,[e_1,e_2]}^{W,\#}\varphi$
is evaluated for each graph type $\type\in\settypes$. Its per-type count
contains the incoming neighbors $j\neq i$ such that agent $j$ satisfies
$\varphi$ and the graph-constructor weight
$\gen^{\type}(\jointstate_t,\world_t)(j,i)$ lies in the admissible interval
$W = [w_{\min},w_{\max}]$; the count must lie in $[e_1,e_2]$.
The symbol $\#\in\{\exists,\forall\}$ denotes existential or universal
quantification over $\type\in\settypes$.
The MIP encoding of $\Inop$ proceeds in three steps for a fixed graph type, plus a fourth step that combines per-type encodings under the modal quantifier.
\begin{enumerate}[leftmargin=*, itemsep=0pt, topsep=0pt, parsep=0pt]
\item \textit{Eligible incoming edges.}
For each \(j\in\nodes\setminus\{i\}\), introduce binary variables
\(\lambda_{j,i,t}^{\mathrm{low},\type}\) and
\(\lambda_{j,i,t}^{\mathrm{high},\type}\), indicating satisfaction of the
lower and upper weight bounds, respectively, and an eligibility variable
\(\gamma_{j,i,t}^\type\). We impose
\begin{equation}
\label{eq:enc-in1}
\begin{aligned}
\weights_t^\type(j,i)
&\geq w_{\min}
-M\bigl(1-\lambda_{j,i,t}^{\mathrm{low},\type}\bigr),\\
\weights_t^\type(j,i)
&\leq w_{\min}-\delta_w
+M\lambda_{j,i,t}^{\mathrm{low},\type},\\
\weights_t^\type(j,i)
&\leq w_{\max}
+M\bigl(1-\lambda_{j,i,t}^{\mathrm{high},\type}\bigr),\\
\weights_t^\type(j,i)
&\geq w_{\max}+\delta_w
-M\lambda_{j,i,t}^{\mathrm{high},\type},\\
\gamma_{j,i,t}^\type
\leq a_{j,i,t}^\type,&\quad
\gamma_{j,i,t}^\type
\leq\lambda_{j,i,t}^{\mathrm{low},\type},
\gamma_{j,i,t}^\type
\leq\lambda_{j,i,t}^{\mathrm{high},\type},\\
\gamma_{j,i,t}^\type
&\geq
a_{j,i,t}^\type
+\lambda_{j,i,t}^{\mathrm{low},\type}
+\lambda_{j,i,t}^{\mathrm{high},\type}-2,
\end{aligned}
\end{equation}
where \(\delta_w>0\) is a fixed numerical separation margin.
We assume that feasible edge weights are \(\delta_w\)-separated from the
outside of each interval boundary: a weight below \(w_{\min}\) is at most
\(w_{\min}-\delta_w\), and a weight above \(w_{\max}\) is at least
\(w_{\max}+\delta_w\). Under this convention,
\eqref{eq:enc-in1} enforces
\[
\gamma_{j,i,t}^\type=1
\Leftrightarrow
\left(
a_{j,i,t}^\type=1
\land
w_{\min}\leq\weights_t^\type(j,i)\leq w_{\max}
\right).
\]
\item \textit{Count neighbors satisfying $\varphi$.} Let $z_{\varphi,j,t}\in\{0,1\}$ denote satisfaction of subformula $\varphi$ by
agent $j$ at time $t$.
Introduce $y_{j,i,t}^{\varphi,\type}\in\{0,1\}$ encoding
$\gamma_{j,i,t}^{\type}\wedge z_{\varphi,j,t}$:
\begin{equation}
\label{eq:enc-in2}
\scalebox{0.88}{$\displaystyle
y_{j,i,t}^{\varphi,\type} \le \gamma_{j,i,t}^{\type}, \quad
y_{j,i,t}^{\varphi,\type} \le z_{\varphi,j,t}, \quad
y_{j,i,t}^{\varphi,\type} \ge \gamma_{j,i,t}^{\type}+z_{\varphi,j,t}-1.
$}
\end{equation}
and define the cardinality
\(
c_{i,t}^{\Inop,\varphi,\type}
:= \sum_{j\neq i} y_{j,i,t}^{\varphi,\type}.
\)

\item \textit{Cardinality enforcement.}
First suppose \(e_2<\infty\). Since
\(c_{i,t}^{\Inop,\varphi,\type}\) is integer-valued, introduce binaries
\(\alpha_{i,t}^{\mathrm{low},\type}\) and
\(\alpha_{i,t}^{\mathrm{high},\type}\), indicating
\(c_{i,t}^{\Inop,\varphi,\type}<e_1\) and
\(c_{i,t}^{\Inop,\varphi,\type}>e_2\), respectively:
\begin{equation}
\label{eq:enc-in3}
\begin{aligned}
c_{i,t}^{\Inop,\varphi,\type}
&\leq e_1-1
+M\bigl(1-\alpha_{i,t}^{\mathrm{low},\type}\bigr),\\
c_{i,t}^{\Inop,\varphi,\type}
&\geq e_1-M\alpha_{i,t}^{\mathrm{low},\type},\\
c_{i,t}^{\Inop,\varphi,\type}
&\geq e_2+1
-M\bigl(1-\alpha_{i,t}^{\mathrm{high},\type}\bigr),\\
c_{i,t}^{\Inop,\varphi,\type}
&\leq e_2+M\alpha_{i,t}^{\mathrm{high},\type}.
\end{aligned}
\end{equation}
The per-type satisfaction variable is encoded as
\begin{equation}
\label{enc-in4}
\begin{aligned}
z_{\psi,i,t}^\type
&\leq1-\alpha_{i,t}^{\mathrm{low},\type},\quad
z_{\psi,i,t}^\type
\leq1-\alpha_{i,t}^{\mathrm{high},\type},\\
z_{\psi,i,t}^\type
&\geq
1-\alpha_{i,t}^{\mathrm{low},\type}
-\alpha_{i,t}^{\mathrm{high},\type}.
\end{aligned}
\end{equation}

For a lower-bounded interval \([e_1,\infty)_{\mathbb N}\), only the lower
violation variable is required:
\begin{equation}
\label{eq:enc-in-lower-unbounded}
\begin{aligned}
c_{i,t}^{\Inop,\varphi,\type}
&\leq e_1-1
+M\bigl(1-\alpha_{i,t}^{\mathrm{low},\type}\bigr),\\
c_{i,t}^{\Inop,\varphi,\type}
&\geq e_1-M\alpha_{i,t}^{\mathrm{low},\type}, \quad
z_{\psi,i,t}^\type
=1-\alpha_{i,t}^{\mathrm{low},\type}.
\end{aligned}
\end{equation}


\item \textit{Quantifying Operators} The existential operator
$\psi = \Inop_{\graphs,E}^{W,\exists}\varphi$ requires that at least one
graph type in $\settypes$ satisfies the counting property. For each
$\type\in\settypes$, the per-type encoding above produces a satisfaction
variable $z_{\psi,i,t}^{\type}$; we then enforce the disjunction over types:
\begin{equation}
\label{eq:enc-in5}
\scalebox{0.92}{$\displaystyle
z_{\psi,i,t} \ge z_{\psi,i,t}^{\type} \  \forall \type \in \settypes, \quad
z_{\psi,i,t} \le \sum_{\type \in \settypes} z_{\psi,i,t}^{\type}.
$}
\end{equation}
These constraints ensure $z_{\psi,i,t} = 1$ iff some graph satisfies the property.
\end{enumerate}

The universal operator
$\psi = \Inop_{\graphs,E}^{W,\forall}\varphi$ requires that all graph types
in $\settypes$ satisfy the counting property. This is encoded using a
conjunction instead of the disjunction over graph types.
\begin{equation}
\label{eq:enc-in6}
\scalebox{0.9}{$\displaystyle
z_{\psi,i,t} \le z_{\psi,i,t}^{\type} \;\; \forall \type \in \settypes, \quad
z_{\psi,i,t} \ge 1 - |\settypes| + \sum_{\type \in \settypes} z_{\psi,i,t}^{\type}.
$}
\end{equation}
The outgoing operator is obtained from the incoming case by replacing $(j,i)$ with $(i,j)$ throughout.

\begin{lemma}
The planning problem for agent-local STL-GO specifications, for multi-agent systems whose agent dynamics are described by discrete-time affine difference equations, can be encoded into a MIP such that any satisfying assignment yields a trajectory satisfying the specification.
\label{lemma:milp-graph}
\end{lemma}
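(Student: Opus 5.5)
The proof goes by structural induction on the agent-local formula $\varphi$, with an invariant tying each auxiliary binary to the semantics. Fix any feasible assignment of the MIP, including its variables $\{\agentstate_t^i\}$, $\{\agentinput_t^i\}$ and all auxiliary binaries. First, the dynamics constraints \eqref{eq:affine-agent-dyn} and the bounds \eqref{eq:state_input_constraints} make the state trajectory exactly the execution $\mas$ generated from $\jointstate_0$ and the fixed world trajectory $\{\world_t\}$ under the chosen inputs.

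Next, the MIP-encodability assumption on each constructor $\gen^{\type}$ (exact mixed-integer representation of $\eta_{i,j}^{\type}$, piecewise-affine $e_{i,j}^{\type}$) gives $a_{i,j,t}^{\type}=1 \iff (i,j)\in\edge_t^{\type}$ and $\weights_t^{\type}(i,j)=e_{i,j}^{\type}(\jointstate_t,\world_t)$. So the graphs seen by the MIP are exactly $\graphs_t$.

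The invariant to prove is
\[
z_{\varphi,i,t}=1 \iff (\mas,i,t)\models\varphi \qquad\text{for all } i\in\nodes,\ t\le T,
\]
for every subformula $\varphi$ of the specification. The specification constraint $z_{\varphi,i,0}=1$ (or the appropriate conjunction or disjunction over agents for $\FA$/$\EX$) then yields satisfaction.

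The cases run as follows.
\begin{itemize}
\item \emph{Base cases and Boolean and temporal operators.} For $\top$, atomic predicates $\mu_x$ (Big-$M$ with valid bounds $M$ derived from the box constraints and a separation margin), $\neg$, $\wedge$ and $\Until_{[a,b]}$, I will invoke the standard STL-to-MILP correctness argument of~\cite{stl-to-milp1,stl-to-milp2}, as recalled in Appendix~\ref{sec:appendix-MILP-encoding}. The only adaptation is the strong bounded-horizon convention: $z=0$ whenever $t+b>T$.
\item \emph{Graph operators, step 1.} Take $\psi=\Inop_{\graphs,E}^{W,\#}\varphi$ and fix a type $\type$. Using the $\delta_w$-separation assumption and $M$ no smaller than the range of $\weights_t^\type$ (finite because $e_{j,i}^{\type}$ is piecewise affine on a bounded box), I will check case by case that \eqref{eq:enc-in1} forces $\lambda^{\mathrm{low}}=1 \iff \weights\ge w_{\min}$ and $\lambda^{\mathrm{high}}=1 \iff \weights\le w_{\max}$. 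The product constraints then give $\gamma=a\wedge\lambda^{\mathrm{low}}\wedge\lambda^{\mathrm{high}}$.
\item \emph{Step 2.} By the induction hypothesis on $\varphi$, \eqref{eq:enc-in2} gives $y_{j,i,t}^{\varphi,\type}=1$ exactly when $(j,i)$ is an eligible edge and $(\mas,j,t)\models\varphi$. Hence $c^{\Inop,\varphi,\type}_{i,t}$ equals the cardinality in the semantics.
\item \emph{Step 3.} Since $c$ is an integer in $[0,N-1]$, taking $M\ge N+e_2$ (or $M\ge N+e_1$ in the unbounded case) makes \eqref{eq:enc-in3} force $\alpha^{\mathrm{low}}=1\iff c<e_1$ and $\alpha^{\mathrm{high}}=1\iff c>e_2$. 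The two violations cannot both hold, so \eqref{enc-in4} and \eqref{eq:enc-in-lower-unbounded} give $z^\type_{\psi,i,t}=1\iff c\in E$.
\item \emph{Step 4.} Constraints \eqref{eq:enc-in5} and \eqref{eq:enc-in6} are the standard exact linearizations of OR and AND over the binaries $z^\type_{\psi,i,t}$. This yields $\#\,\type\in\settypes$.
\item \emph{$\Outop$.} The same argument applies with $(j,i)$ replaced by $(i,j)$.
\end{itemize}

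The main obstacle is making the Big-$M$ steps exact rather than merely one-directional. The argument needs uniform finite bounds on every expression compared: the weights and the affine atomic comparisons over the bounded state, input and world box, and the counts. It also needs the separation margins $\delta_w$ and $\delta$ to exclude the open gaps near thresholds. I would state these as explicit hypotheses, namely valid $M$ and feasible values $\delta$-separated from the thresholds, matching the paper's convention, and derive $M$ from interval arithmetic over the box. The conclusion claimed is soundness only: feasible MIP solutions yield satisfying trajectories. Completeness can fail when margins exclude some trajectories, and I would not attempt to prove it.
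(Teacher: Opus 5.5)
Your proposal is correct and follows essentially the same route as the paper. Both argue by structural induction on the invariant $z_{\psi,i,t}=1 \iff (\mas,i,t)\models\psi$, treat the dynamics as equalities, cite the standard STL-to-MIP cases, and handle the graph operator via eligibility, the conjunction $y=\gamma\wedge z$, cardinality and quantification over types, with $\Outop$ by symmetry.

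One refinement: the bounds on $M$ and the gap-avoidance assumptions you flag as the main obstacle are only needed for completeness, not for the soundness you claim. For soundness, the two-sided Big-$M$ constraints already force each biconditional in every feasible solution for any $M>0$ and any positive margins $\varepsilon,\delta_w$, with integrality supplying the margin for counts.
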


\begin{proof}
In the interest of space, we provide a proof sketch, with the full proof in Appendix~\ref{sec:appendix-theory-milp}.
We establish the invariant
\begin{equation}
    z_{\psi, i, t} = 1 \iff (\mas, i, t) \models \psi
  \label{eq:milp-invariant}
\end{equation}
\noindent for every agent-local subformula $\psi$, agent $i$, and time $t$, by structural induction on $\psi$.
\begin{enumerate}[wide, labelwidth=!, labelindent=0pt,nosep]
\item 
The dynamics constraint~\eqref{eq:affine-agent-dyn} is an equality, so feasible assignments correspond to valid trajectories of $F$. 
\item Atomic predicates, Boolean connectives, and the until operator are encoded by standard Big-$M$ and unrolling constructions~\cite{stl-to-milp1,stl-to-milp2}.
\item For
\(\psi=\Inop_{\graphs,[e_1,e_2]}^{W,\#}\varphi\),
MIP-encodability of \(\gen^\type\), for each \(\type\in\settypes\), provides exact edge indicators and
PWA weight expressions. Constraint \eqref{eq:enc-in1} enforces
that \(\gamma_{j,i,t}^\type=1\) iff \((j,i)\in\edge_t^\type\) and its weight
lies in \(W\). Constraint \eqref{eq:enc-in2} then counts exactly those
eligible neighbors that satisfy \(\varphi\).
Equations \eqref{eq:enc-in3}--\eqref{enc-in4} encode a finite cardinality
interval, while \eqref{eq:enc-in-lower-unbounded} encodes
\([e_1,\infty)_{\mathbb N}\).
Finally, \eqref{eq:enc-in5}--\eqref{eq:enc-in6} encode existential or
universal quantification over graph types. The outgoing case is symmetric.
 
\end{enumerate}

Conjoining all constraints, the planning problem for agent-local STL-GO specifications is the feasibility of a MIP whose decision variables include the per-step actions.
\end{proof}
\subsection{Encoding Multi-Agent Quantifiers}\label{sec:quant-specs-milp}
\mypara{Existential Quantification over agents} The existential quantifier $\EX \varphi$ holds when at least one agent $i \in \nodes$ satisfies $\varphi$.
Let $\psi := \EX \varphi$. We encode $\psi$ as:
\begin{equation}
    z_{\psi,t} \ge z_{\varphi,i, t} , \ \  \forall i\in \nodes ;\ \   z_{\psi,t} \le  \sum_{i\in \nodes} z_{\varphi,i, t}
\end{equation}
\mypara{Universal Quantification over agents} The universal quantifier $\FA \varphi$ holds when $\varphi$ holds for all agents $i \in \nodes$.
Let $\psi := \FA \varphi$. We encode $\psi$ as:
\begin{equation}
    z_{\psi,t} \le z_{\varphi,i, t} , \ \  \forall i\in \nodes ; \qquad  z_{\psi,t} \ge 1 - |\nodes| +  \sum_{i\in \nodes} z_{\varphi,i, t}
\end{equation}

\begin{theorem}
    For an STL-GO specification $\phi$ and horizon $T$, if the multi-agent system dynamics and graph constructor are MIP-encodable as described above, and the corresponding MIP encoding is feasible, then the resulting state trajectory $\{\jointstate_t\}_{t=0}^{T}$ satisfies $\phi$, i.e., $(\mas,0)\models \phi$. 
    \label{thm:milp-soundness}
\end{theorem}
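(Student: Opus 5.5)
The plan is to lift the invariant \eqref{eq:milp-invariant} of Lemma~\ref{lemma:milp-graph} from agent-local formulas to multi-agent formulas by a second structural induction. Fix a feasible assignment of the MIP. First, feasibility of the dynamics equalities \eqref{eq:affine-agent-dyn} together with the fixed initial state $\jointstate_0$ and known world trajectory $\{\world_t\}$ means the state variables define a genuine execution $\mas$ under the control variables $\{\jointcontrol_t\}$. Next, MIP-encodability of each $\gen^\type$ means that the edge indicators $a^\type_{i,j,t}$ and weight expressions agree exactly with the graphs $\gen^\type(\jointstate_t,\world_t)$ induced by that execution. Consequently, Lemma~\ref{lemma:milp-graph} applies: for every agent-local subformula $\varphi$, agent $i$ and time $t\le T$, we have $z_{\varphi,i,t}=1 \iff (\mas,i,t)\models\varphi$.

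I will then prove, by induction on the multi-agent formula $\phi$, the analogous invariant $z_{\phi,t}=1\iff(\mas,t)\models\phi$ for every subformula and every $t\in\{0,\dots,T\}$.
\begin{itemize}
\item For $\top$ the case is immediate.
\item For a joint predicate $\mu$, the exact Big-$M$ encoding with separation margin yields the equivalence, exactly as for $\mu_x$ in the appendix.
\item For $i.\varphi$, we set $z_{i.\varphi,t}:=z_{\varphi,i,t}$ and apply the lemma.
\item Negation, conjunction and $\Until_{[a,b]}$ use the standard unrolled constraints from the appendix. In particular, $\Until_{[a,b]}$ is fixed to $0$ when $t+b>T$, matching the strong bounded semantics, and otherwise is a disjunction over $t'\in t\oplus[a,b]$ of conjunctions. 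Each of these is an exact linear encoding of the Boolean function of its children, so the induction hypothesis transfers.
\item For $\EX\varphi$, the constraints $z_{\psi,t}\ge z_{\varphi,i,t}$ and $z_{\psi,t}\le\sum_i z_{\varphi,i,t}$ with binary variables force $z_{\psi,t}=\max_i z_{\varphi,i,t}$. Hence $z_{\psi,t}=1$ iff some $i$ has $(\mas,i,t)\models\varphi$, by the lemma.
\item Dually, for $\FA\varphi$ the two constraints force $z_{\psi,t}=\min_i z_{\varphi,i,t}$.
\end{itemize}
Finally, the encoding asserts $z_{\phi,0}=1$, so the invariant at $t=0$ gives $(\mas,0)\models\phi$.

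The main obstacle is ensuring the invariant is a genuine biconditional rather than a one-directional implication. Negation and the $\Globally$ derived through negation require both directions; a merely sound, one-sided encoding would break under $\neg$. So I will check carefully that every gadget pins its output variable exactly.
\begin{itemize}
\item The Big-$M$ gadgets need $M$ to dominate the bounded ranges implied by \eqref{eq:state_input_constraints} and PWA weights, together with the $\delta_w$-separation assumption.
\item The cardinality gadgets use integrality of the counts $c$ and $M\ge N$.
\end{itemize}
A second subtlety is the boundary $t+b>T$ and indices beyond the horizon, which I would handle by defining all such $z$ as constants consistent with the footnoted semantics. With exactness established, soundness follows directly.
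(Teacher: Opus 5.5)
Your proposal is correct and follows the paper's proof essentially step for step: take the agent-local invariant of Lemma~\ref{lemma:milp-graph}, extend it by structural induction through joint predicates, the embedding $i.\varphi$, the Boolean and until gadgets, and the $\EX$/$\FA$ gadgets, and conclude from the asserted constraint $z_{\phi,0}=1$. One small refinement: the size of $M$ and the margins $\varepsilon,\delta_w$ only matter for completeness, i.e., for not cutting off genuine executions; every gadget is two-sided, so on any feasible assignment each indicator is already pinned by the constraint its value activates (e.g.\ $z=0$ forces $a^\top x-b\le-\varepsilon<0$), and that is all the biconditional needs for soundness.
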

\begin{proof}[Proof Sketch]
Lemma~\ref{lemma:milp-graph} establishes~\eqref{eq:milp-invariant} for agent-local subformulas. We extend it to multi-agent subformulas $\psi$ as $z_{\psi, t} = 1 \iff (\mas, t) \models \psi$ by structural induction: joint atomic predicates, embedding $i.\varphi$, Boolean connectives, and until reuse the agent-local arguments on multi-agent variables; $\EX$ and $\FA$ are encoded as disjunction and conjunction over $\{z_{\varphi, i, t}\}_{i \in \nodes}$ (Section~\ref{sec:quant-specs-milp}), matching the semantics by the agent-local invariant. Feasibility with $z_{\phi, 0} = 1$ as a constraint yields $(\mas, 0) \models \phi$. The full proof is in Appendix~\ref{sec:appendix-theory-milp}.
\end{proof}

\section{SMT-based Encoding for STL-GO Planning}
\label{sec:stlgo-smt}

We encode the multi-agent planning problem as a quantifier-free SMT instance in the theory of Linear Real Arithmetic with integers (LRA + LIA), where a satisfying assignment yields a control sequence whose execution satisfies the STL-GO specification.
\subsection{Encoding System Specifications}

\noindent\mypara{State and Control Constraints}
To ensure physical and operational feasibility, agent states are constrained to a hyper-rectangular workspace $\mathcal{X}_{\mathrm{ws}} \subset \mathbb{R}^{n_x}$ with componentwise bounds, and control inputs are bounded componentwise by actuator limits (e.g., maximum velocity, thrust, steering angle).
\begin{equation}
\label{eq:smt-state-input-bounds}
\bigwedge_{t=0}^{T} \bigwedge_{i\in\Vc} \left( \mathbf{x}_{\min} \le x_{t}^i \le \mathbf{x}_{\max} \right), \;
\bigwedge_{t=0}^{T-1} \bigwedge_{i\in\Vc} \left( \mathbf{u}_{\min} \le u_t^i \le \mathbf{u}_{\max} \right)
\end{equation}

\noindent\mypara{System Dynamics}
We assume the system follows deterministic discrete-time affine dynamics:\footnote{The encoding and soundness result presented here use affine dynamics and
LRA+LIA. The same structural translation could instead be instantiated using
an SMT solver over nonlinear arithmetic, such as dReal, to support non-affine
dynamics; we leave that extension for future work.}
\begin{equation}
\label{eq:smt-dynamics}
\agentstate_{t+1}^i = A \agentstate_t^i + B \agentinput_t^i + E \world_t + c \quad 
\forall i \in \Vc, \quad \forall t = 0, 1, \ldots, T-1
\end{equation}


\noindent\mypara{Interaction Graphs}
We focus on interaction graphs whose structure and edge weights depend only on
the joint agent state and world state via the graph constructor function. For
each time $t$ and ordered pair $(i,j)$, let
$\eta_{i,j}^{\type}(\jointstate_t,\world_t)$ be the edge-existence predicate
and let $e_{i,j}^{\type}(\jointstate_t,\world_t)$ compute the corresponding
edge weight. We say $\gen^{\type}$ is \emph{SMT-encodable} if, for every
$(i,j)$, $\eta_{i,j}^{\type}$ is a Boolean formula over LRA+LIA and
$e_{i,j}^{\type}$ is an LRA term. Both can then be encoded directly in the
quantifier-free SMT instance.

\subsection{Encoding Agent-local Operators}
\label{sec:smt-agent-local}

For each subformula $\psi$ of specification $\varphi$, agent $i \in \Vc$, and time $t \in T$,
we introduce a Boolean variable $z_{\psi,t}^i \in \{\top, \bot\}$ with the intended semantics
\begin{equation}
z_{\psi,t}^{i} = \top
\;\Longleftrightarrow\;
(\mas,i,t) \models \psi
\end{equation}
The encoding of atomic predicates, logical operators, and temporal operators
closely follow established formulations in the literature~\cite{momtaz2023monitoring,prabhakar2018automatic}.
They are included in Appendix~\ref{sec:appendix-SMT-encoding} for completeness and correctness.

\noindent\mypara{Graph Operators}

\label{sec:smt-in-out}
For $\psi=\Inop_{\graphs,[e_1,e_2]}^{W,\#}\varphi$, the SMT encoding
proceeds in three steps for each fixed graph type $\type\in\settypes$, plus a
fourth step that combines the per-type encodings under the modal quantifier
$\#$.

\begin{enumerate}[leftmargin=*, itemsep=2pt, topsep=2pt, parsep=0pt]
\item \textit{Eligible incoming edges.}
For each graph type \(\type\in\settypes\), agents \(i,j\in\nodes\) with
\(i\neq j\), and time \(t\), introduce a Boolean variable
\(b_{j,i,t}^{\type}\), indicating that \((j,i)\) is an edge in the graph
instance \(\graph_t^\type\) and that its weight lies in \(W\).
\begin{align}
b_{j,i,t}^{\type}
\;\leftrightarrow\;&
\Bigl(
\eta_{j,i}^{\type}(\jointstate_t,\world_t)
\notag\\
&\quad\wedge\;
w_{\min}
\leq
e_{j,i}^{\type}(\jointstate_t,\world_t)
\leq
w_{\max}
\Bigr).
\label{eq:smt-eligible}
\end{align}

where \(\eta_{j,i}^{\type}\) is the edge-existence predicate and
\(e_{j,i}^{\type}(\jointstate_t,\world_t)\) is the
edge-weight supplied by the graph constructor. If weights are not
of interest and \(W=(-\infty,\infty)\), the weight comparisons are omitted.

\item \textit{Count neighbors satisfying $\varphi$.}
Let $z^j_{\varphi,t}$ be a boolean variable to denote satisfaction of subformula $\varphi$ by
agent $j$ at time $t$.
This is encoded as
\begin{equation}
\label{eq:smt-conjunct}
n^{\type}_{j,i,t} = b^{\type}_{j,i,t} \wedge z^j_{\varphi,t}
\end{equation}

\item \textit{Cardinality.}
For each graph type $\type \in \settypes$, define an integer variable $c^{\type}_{i,t}$ counting eligible $\varphi$-satisfying incoming neighbors:
\begin{equation}
\label{eq:smt-cardinality}
\scalebox{0.85}{$\displaystyle
c^{\type}_{i,t} = \sum_{j \ne i} \mathsf{ite}(n^{\type}_{j,i,t}, 1, 0),
$}
\end{equation}
The per-type satisfaction variable is constrained by
\begin{equation}
\label{eq:smt-cardinality-satisfaction}
z_{\psi,t}^{i,\type}
\;\leftrightarrow\;
\begin{cases}
e_1\leq c_{i,t}^{\type}\leq e_2,
& e_2<\infty,\\
e_1\leq c_{i,t}^{\type},
& e_2=\infty.
\end{cases}
\end{equation}

\item \textit{Quantification over Graph Types.}
For $\# = \exists$ the satisfaction of $\psi$ at agent $i$ is the
disjunction over types of the per-type satisfactions; 
\begin{equation}
\label{eq:smt-graph-quant}
z^i_{\psi, t} \; = \;
\bigvee_{\type \in \settypes} z^{i, \type}_{\psi, t}
\quad (\# = \exists), 
\end{equation}
\end{enumerate}
For $\# = \forall$
it is the conjunction instead of the disjunction.
The encoding of the $\Outop_{\graphs, [e_1, e_2]}^{W, \#} \varphi$ operator is obtained from the incoming case by substituting $(j, i)$ with $(i, j)$ in steps 1--3.
\begin{lemma}
\label{lemma:smt-graph}
The planning problem for agent-local STL-GO specifications, for multi-agent systems whose agent dynamics are described by discrete-time affine difference equations and whose graph constructors are SMT-encodable, can be encoded into a quantifier-free SMT instance such that any satisfying assignment yields a trajectory satisfying the specification.
\end{lemma}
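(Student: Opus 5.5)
The plan follows the proof of Lemma~\ref{lemma:milp-graph}. The SMT case is simpler because Boolean and integer variables are native, so no Big-$M$ constants or separation margins $\delta_w$ are needed. Fix a satisfying assignment of the conjunction of \eqref{eq:smt-state-input-bounds}, \eqref{eq:smt-dynamics}, the graph-constructor constraints, all subformula constraints, and the root constraint requiring the top-level subformula variable to be $\top$ at the time of interest. Since \eqref{eq:smt-dynamics} is an equality over LRA, the assigned values $\{\agentstate_t^i\}$ are exactly the trajectory of $F$ under the assigned inputs, and this trajectory respects the bounds. Hence the assignment determines a genuine execution $\mas$. Because the world trajectory is fixed, each term $\eta_{j,i}^{\type}(\jointstate_t,\world_t)$ and $e_{j,i}^{\type}(\jointstate_t,\world_t)$ evaluates to the true edge relation and weight of $\gen^{\type}(\jointstate_t,\world_t)$. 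This holds because SMT-encodability makes these terms quantifier-free LRA+LIA formulas and terms, and these are interpreted exactly.

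Next we prove the invariant $z_{\psi,t}^{i}=\top \iff (\mas,i,t)\models\psi$ for every agent-local subformula $\psi$, every agent $i$, and every $t\le T$, by structural induction on $\psi$. The atomic case $\mu_x$ (an LRA predicate), $\neg$, $\wedge$ and the bounded until are handled by the standard bi-implication encodings from the appendix. For until, the encoding unrolls $t\oplus[a,b]$ as a finite disjunction of conjunctions, and it sets the variable to $\bot$ whenever $t+b>T$, which matches the strong bounded semantics. These cases are routine.

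The graph operator is the main case. Take $\psi=\Inop^{W,\#}_{\graphs,[e_1,e_2]}\varphi$.
\begin{itemize}
\item By \eqref{eq:smt-eligible}, $b^{\type}_{j,i,t}$ is true exactly when $(j,i)\in\edge^{\type}_t$ and $\weights^{\type}_t(j,i)\in W$.
\item By the induction hypothesis applied to $\varphi$ at agent $j$, together with \eqref{eq:smt-conjunct}, $n^{\type}_{j,i,t}$ is true exactly when $j$ belongs to the set in the semantic definition.
\item By \eqref{eq:smt-cardinality}, the integer $c^{\type}_{i,t}$ equals the cardinality of that set. The sum ranges over $j\neq i$; this is harmless because the graphs on $\nodes$ exclude self-loops, a point I would state explicitly as an assumption or note.
\item Constraint \eqref{eq:smt-cardinality-satisfaction} then gives $z^{i,\type}_{\psi,t}\iff c^{\type}_{i,t}\in E$, in both the finite and the unbounded case.
\item Finally, \eqref{eq:smt-graph-quant} and its conjunctive analogue realize $\exists$ and $\forall$ over the finite set $\settypes$.
\end{itemize}
The outgoing operator is handled identically with $(j,i)$ replaced by $(i,j)$.

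The step most likely to cause trouble is the well-foundedness of the induction through graph operators. The variable for $\psi$ at agent $i$ depends on the variables for $\varphi$ at \emph{other} agents $j$, so the induction must be over subformula structure with the claim quantified over \emph{all} agents and times simultaneously, not over individual $(i,t)$ pairs. I would state the induction hypothesis in that form explicitly. A second point to check is that every definitional constraint is a bi-implication rather than a one-sided implication, so the invariant holds in both directions; this matters because negation flips polarity. With the invariant established, the root constraint $z^{i}_{\phi,t}=\top$ gives $(\mas,i,t)\models\phi$, which is the claim. Since all constraints are quantifier-free and the horizon, agent set and type set are finite, the instance is a finite quantifier-free LRA+LIA formula, which completes the proof.
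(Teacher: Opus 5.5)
Your proposal is correct and follows the paper's proof essentially step for step. Both arguments rest on the dynamics equality and on structural induction on $\psi$, with the invariant quantified over all agents and times. Both treat the atomic, Boolean and until cases via biconditionals and bounded unrolling. Both handle the graph operator in the same four steps: eligibility via \eqref{eq:smt-eligible}, conjunction using the induction hypothesis at neighbor $j$, the $\mathsf{ite}$-count cardinality, and quantification over $\settypes$, with the outgoing case obtained by symmetry. Your remark that summing over $j\neq i$ presupposes that $\edge_t^{\type}$ has no self-loops is a useful clarification of an assumption the paper leaves implicit.
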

\begin{proof}[Proof sketch]
We establish the invariant
\begin{equation}
    z^i_{\psi, t} = \top \iff (\mas, i, t) \models \psi
  \label{eq:smt-invariant}
\end{equation}
for every agent-local subformula $\psi$, agent $i$, and time $t$, by structural induction on $\psi$.
The dynamics constraint~\eqref{eq:smt-dynamics} is an equality in the SMT variables, so any satisfying assignment corresponds to a valid trajectory of $F$. Atomic predicates, Boolean connectives, and the until operator are encoded by direct LRA constraints and finite unrolling over the bounded horizon~\cite{momtaz2023monitoring,prabhakar2018automatic}.
For $\psi = \Inop_{\graphs, [e_1, e_2]}^{W, \#} \varphi$, SMT-encodability of $\gen^{\type}$, for each $\type\in\settypes$, ensures the edge-existence predicate and edge weight are LRA+LIA expressions; \eqref{eq:smt-eligible}--\eqref{eq:smt-cardinality-satisfaction} enforce $z^{i, \type}_{\psi, t} = \top$ iff the count of eligible $\varphi$-satisfying neighbors lies in $[e_1, e_2]$, and~\eqref{eq:smt-graph-quant} lifts this to $\#$-quantification over graph types. The outgoing case is symmetric. The full proof is in Appendix~\ref{sec:appendix-theory-smt}.
\end{proof}
\subsection{Encoding Multi-Agent Quantifiers}
\label{sec:smt-multi-agent}
The existential quantifier $\EX \varphi$ indicates the existence of at least
one agent $i\in\nodes$ for which $\varphi$ holds. For a formula
$\psi:=\EX\varphi$, we encode $\psi$ as
$z_{\psi,t}=\bigvee_{i\in\nodes}z^i_{\varphi,t}$. The universal quantifier
$\FA\varphi$ is similarly defined using $\bigwedge_{i\in\nodes}$ instead of
$\bigvee_{i\in\nodes}$.


\begin{theorem}
For an STL-GO specification $\phi$ and horizon $T$, if the multi-agent system dynamics and graph constructor are SMT-encodable as described above, and if the SMT instance is satisfiable, then any satisfying assignment yields a trajectory $\{\jointstate_t\}_{t=0}^{T}$ such that $(\mas, 0) \models \phi$.
\label{thm:smt-soundness}
\end{theorem}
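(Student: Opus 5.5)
The plan mirrors the MIP soundness argument of Theorem~\ref{thm:milp-soundness}, now in the SMT setting. The key fact is that every constraint in the SMT instance is a \emph{definitional biconditional} for its Boolean or integer variable. Once the real-valued state and input variables are fixed by a model, the truth values of all $z$ variables are therefore uniquely determined and coincide with the semantics.

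Fix a satisfying assignment $\sigma$. First, the dynamics equalities~\eqref{eq:smt-dynamics}, the bounds~\eqref{eq:smt-state-input-bounds}, and the fixed initial state $\jointstate_0$ mean that the values $\sigma(\agentstate^i_t)$ form exactly the trajectory generated by $F$ under the inputs $\sigma(\agentinput^i_t)$. Together with the fixed world trajectory $\{\world_t\}$, this determines a unique execution $\mas$. By SMT-encodability, for each type $\type$ the graph instance $\gen^\type(\jointstate_t,\world_t)$ is then evaluated exactly by the LRA/LIA terms $\eta^\type_{i,j}$ and $e^\type_{i,j}$ under $\sigma$.

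The agent-local case is supplied by Lemma~\ref{lemma:smt-graph}. It gives invariant~\eqref{eq:smt-invariant}, $z^i_{\psi,t}=\top \iff (\mas,i,t)\models\psi$, for every agent-local subformula, agent, and time under $\sigma$. This covers the graph operators $\Inop$ and $\Outop$ via \eqref{eq:smt-eligible}--\eqref{eq:smt-graph-quant}.

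I then extend to multi-agent subformulas by structural induction, proving $z_{\psi,t}=\top \iff (\mas,t)\models\psi$ for all $t\in\{0,\dots,T\}$. The cases are as follows.
\begin{itemize}
\item \emph{Joint atomic predicates} $\mu$: these are encoded as LRA formulas over $(\jointstate_t,\world_t)$ tied to $z_{\mu,t}$ by $\leftrightarrow$, so the claim holds immediately.
\item \emph{Embedding} $i.\varphi$: set $z_{i.\varphi,t}\leftrightarrow z^i_{\varphi,t}$ and apply the agent-local invariant.
\item \emph{Negation and conjunction}: these follow from the biconditional encodings and the induction hypothesis.
\item \emph{Bounded until} $\Until_{[a,b]}$: the encoding is the finite unrolling $z_{\psi,t}\leftrightarrow\bigvee_{t'=t+a}^{t+b}\bigl(z_{\phi_2,t'}\wedge\bigwedge_{t''=t}^{t'-1}z_{\phi_1,t''}\bigr)$ when $t+b\le T$, and $z_{\psi,t}\leftrightarrow\bot$ otherwise. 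This matches the strong bounded-horizon semantics, because the induction hypothesis holds at every index in range.
\item \emph{Agent quantifiers} $\EX$ and $\FA$: the encodings $z_{\psi,t}=\bigvee_{i}z^i_{\varphi,t}$ and $\bigwedge_i z^i_{\varphi,t}$ match the semantics directly by the agent-local invariant.
\end{itemize}
Finally, the instance asserts $z_{\phi,0}=\top$, so $(\mas,0)\models\phi$.

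The main obstacle is making precise that the induction is genuinely \emph{two-sided}, which is what allows negation to be handled. Soundness under negation requires that $z_{\neg\psi,t}=\neg z_{\psi,t}$ reflects falsity of $\psi$ exactly, not merely an upper or lower bound on it. I would therefore check that every auxiliary definition in the appendix encoding, in particular the until unrolling and the $\mathsf{ite}$-based counts in~\eqref{eq:smt-cardinality}, is stated as an equality or biconditional rather than a one-sided implication. If some encoding turns out to be one-directional, the fallback is a polarity argument restricted to negation normal form. The boundary case $t+b>T$ also needs care: I would confirm the encoding sets the until variable to $\bot$ there, so that the derived $\Globally$ becomes vacuously true, consistent with the semantics footnote.
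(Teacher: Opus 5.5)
Your proposal is correct and follows essentially the same route as the paper. The paper also extends the agent-local invariant of Lemma~\ref{lemma:smt-graph} by structural induction over the multi-agent grammar (joint predicates, embedding $i.\varphi$, Boolean connectives, bounded until, $\EX$/$\FA$) and concludes from the asserted root constraint $z_{\phi,0}=\top$. Your two-sidedness concern is settled by inspection: every SMT constraint in the paper is an equality or biconditional, including the until/eventually/always encodings with their $t+b>T$ boundary cases and the graph-operator steps, so the negation-normal-form fallback is never needed.
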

\begin{proof} [Proof Sketch]
Lemma~\ref{lemma:smt-graph} establishes~\eqref{eq:smt-invariant} for agent-local subformulas. We extend it to multi-agent subformulas $\psi$ as $z_{\psi, t} = \top \iff (\mas, t) \models \psi$ by structural induction: joint atomic predicates, embedding $i.\varphi$, Boolean connectives, and until reuse the agent-local arguments on multi-agent variables; $\EX$ and $\FA$ are encoded as described above, matching the semantics by the agent-local invariant. Satisfiability of the SMT instance with $z_{\phi, 0} = \top$ as a constraint yields $(\mas, 0) \models \phi$. The full proof is in Appendix~\ref{sec:appendix-theory-smt}.
\end{proof}

\section{Experiments and Results}
\label{sec:exp}
\begin{figure*}[t]
    \centering
    \begin{subfigure}[b]{0.6\columnwidth}
        \includegraphics[width=\textwidth]{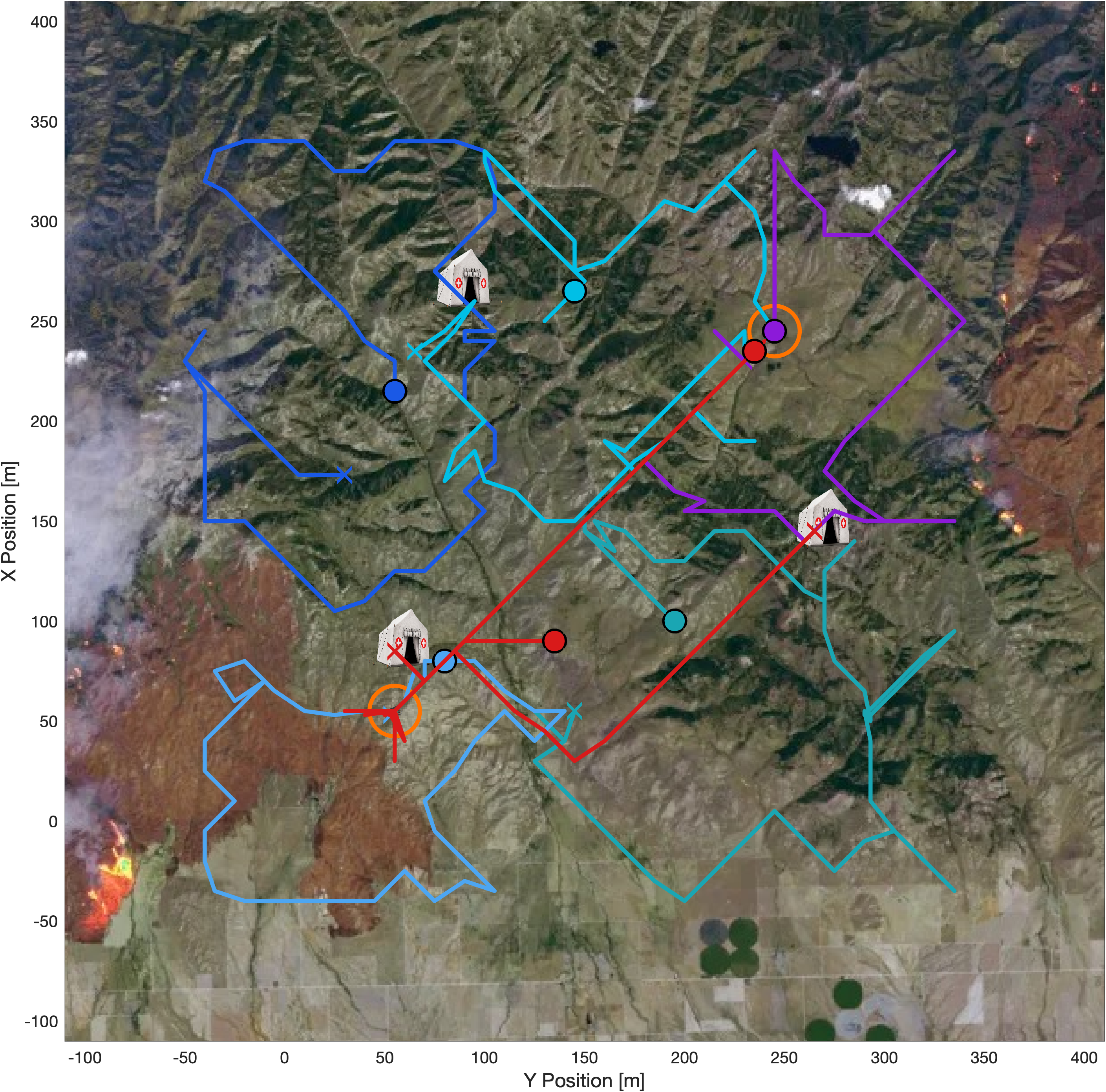}
        \caption{\small No objective}
    \end{subfigure}
    \hfill
    \begin{subfigure}[b]{0.6\columnwidth}
        \includegraphics[width=\textwidth]{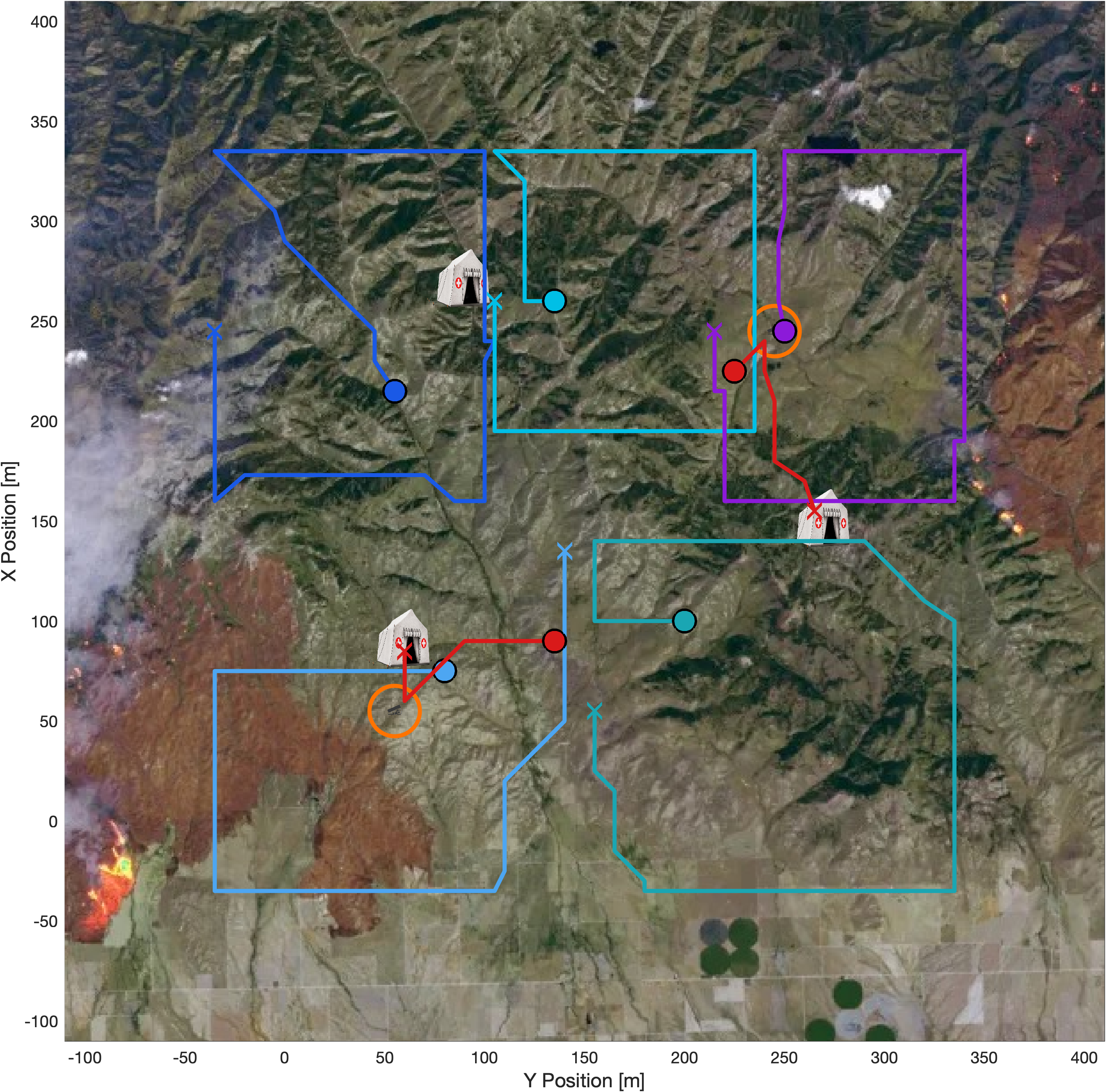}
        \caption{\small Linear objective}
    \end{subfigure}
    \hfill
    \begin{subfigure}[b]{0.6\columnwidth}
        \includegraphics[width=\textwidth]{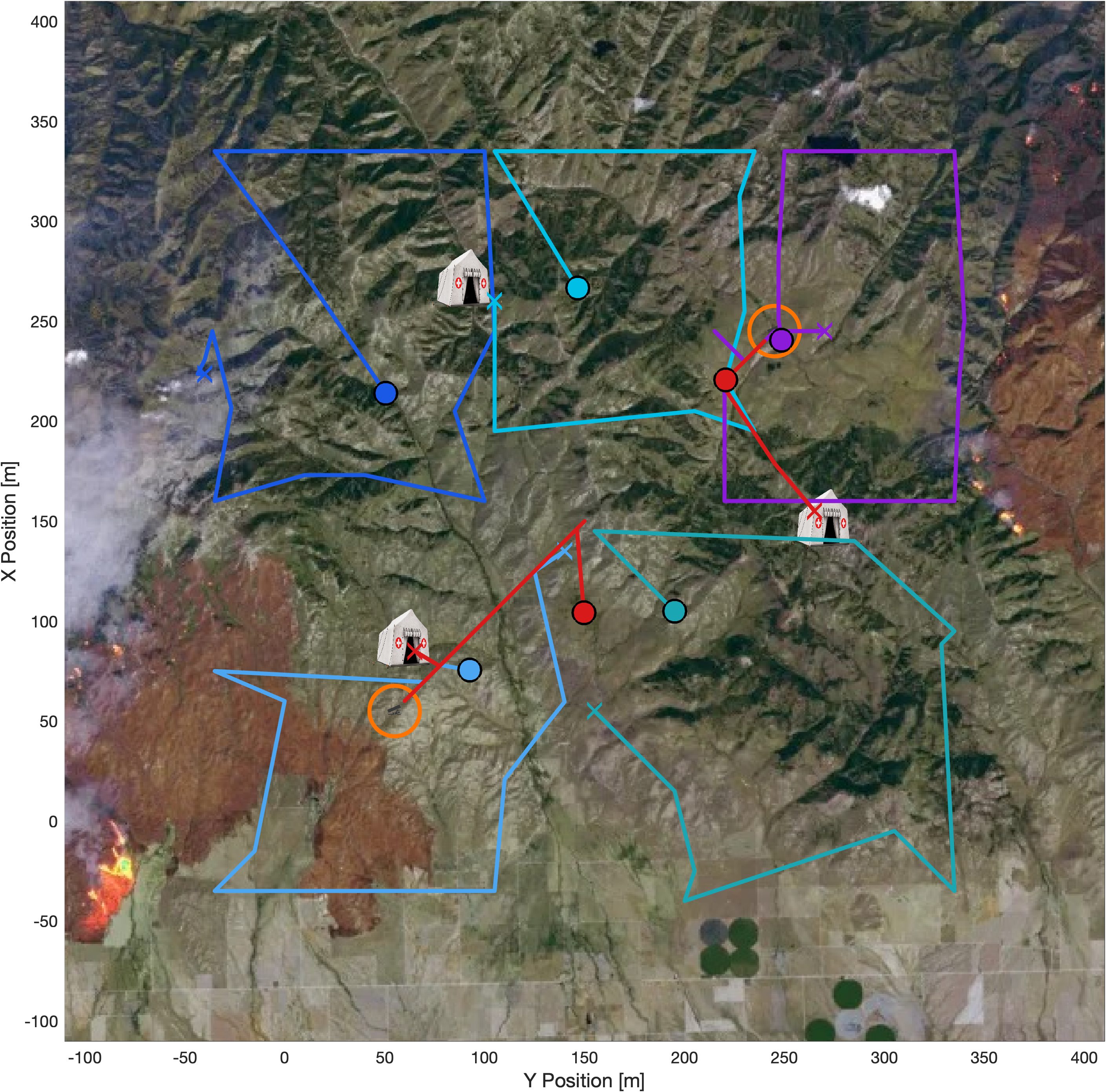}
        \caption{\small Quadratic objective}
    \end{subfigure}
    \caption{\small Effect of objective function on agent trajectories
        (5 locators, 2 rescuers). With no objective~(a), the MIP returns
        an arbitrary feasible solution. A linear objective~(b) and quadratic
        objective~(c) progressively guide the rescuer toward more direct
        paths to the emergency sites.}
    \label{fig:objectives}
\end{figure*}
\begin{figure*}[t]
    \centering
    \begin{subfigure}[b]{0.6\columnwidth}
        \includegraphics[width=\textwidth]{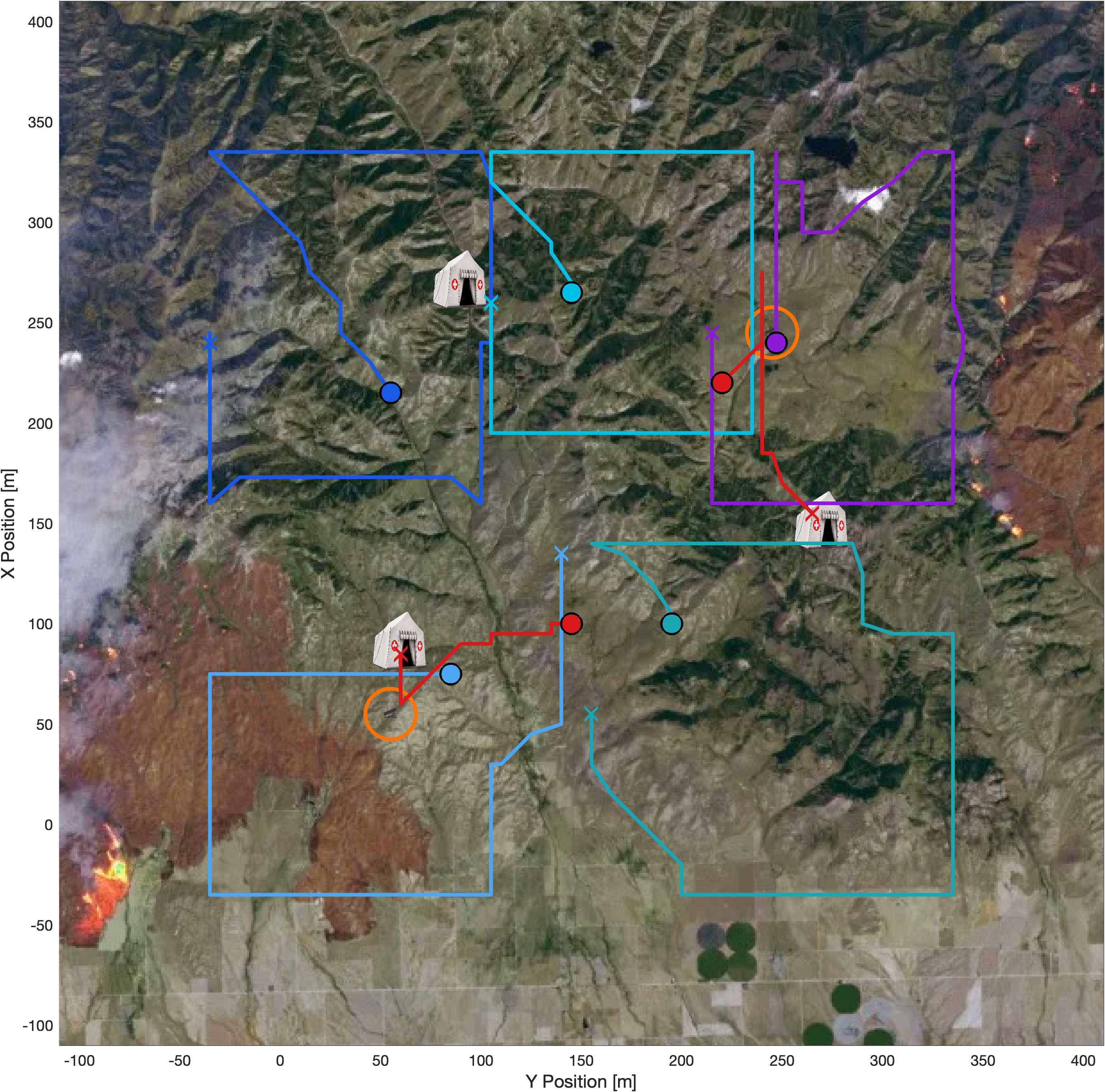}
        \caption{\small 5 locators, 2 rescuers}
    \end{subfigure}
    \hfill
    \begin{subfigure}[b]{0.6\columnwidth}
        \includegraphics[width=\textwidth]{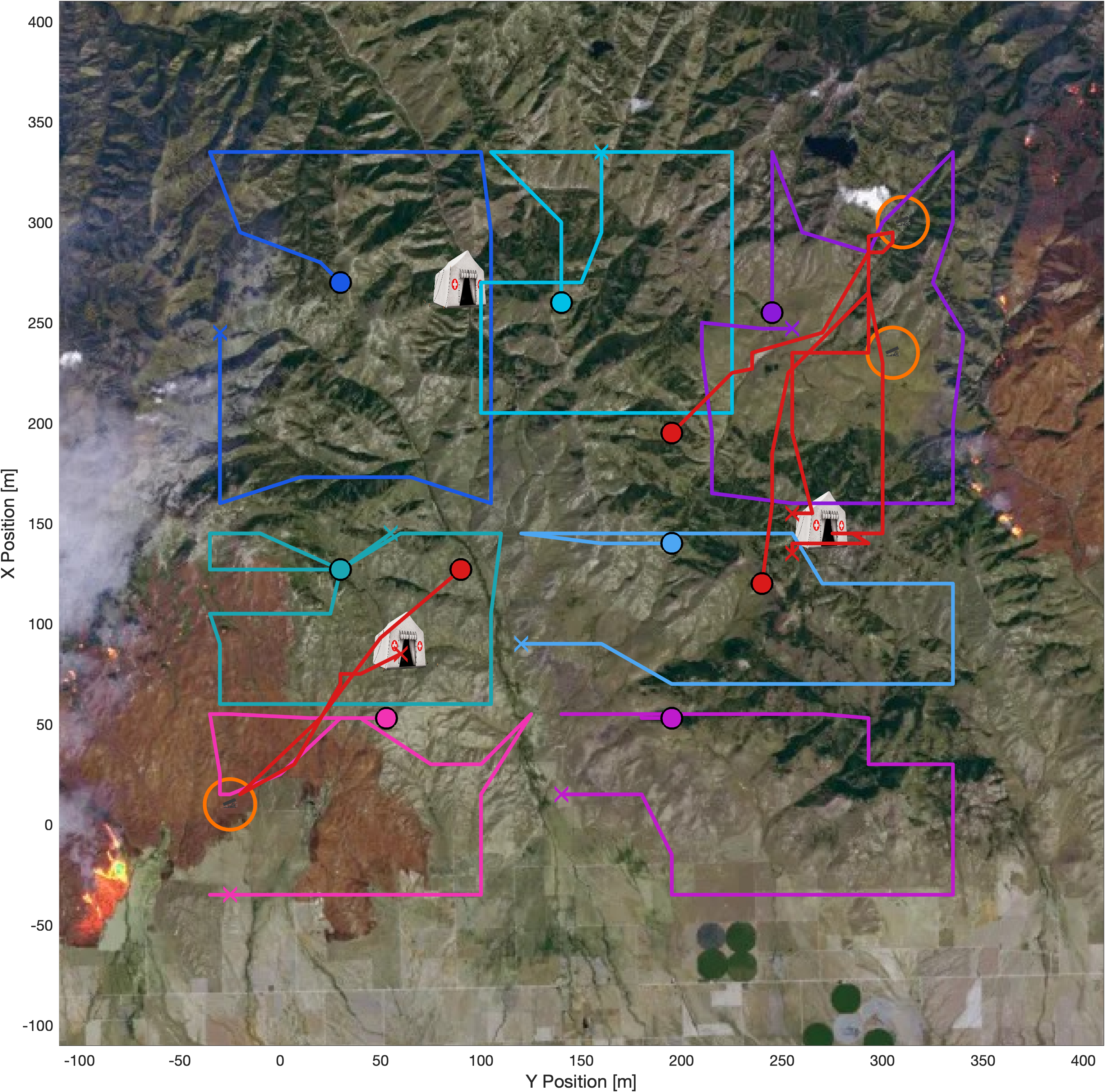}
        \caption{\small 7 locators, 3 rescuers}
    \end{subfigure}
    \hfill
    \begin{subfigure}[b]{0.6\columnwidth}
        \includegraphics[width=\textwidth]{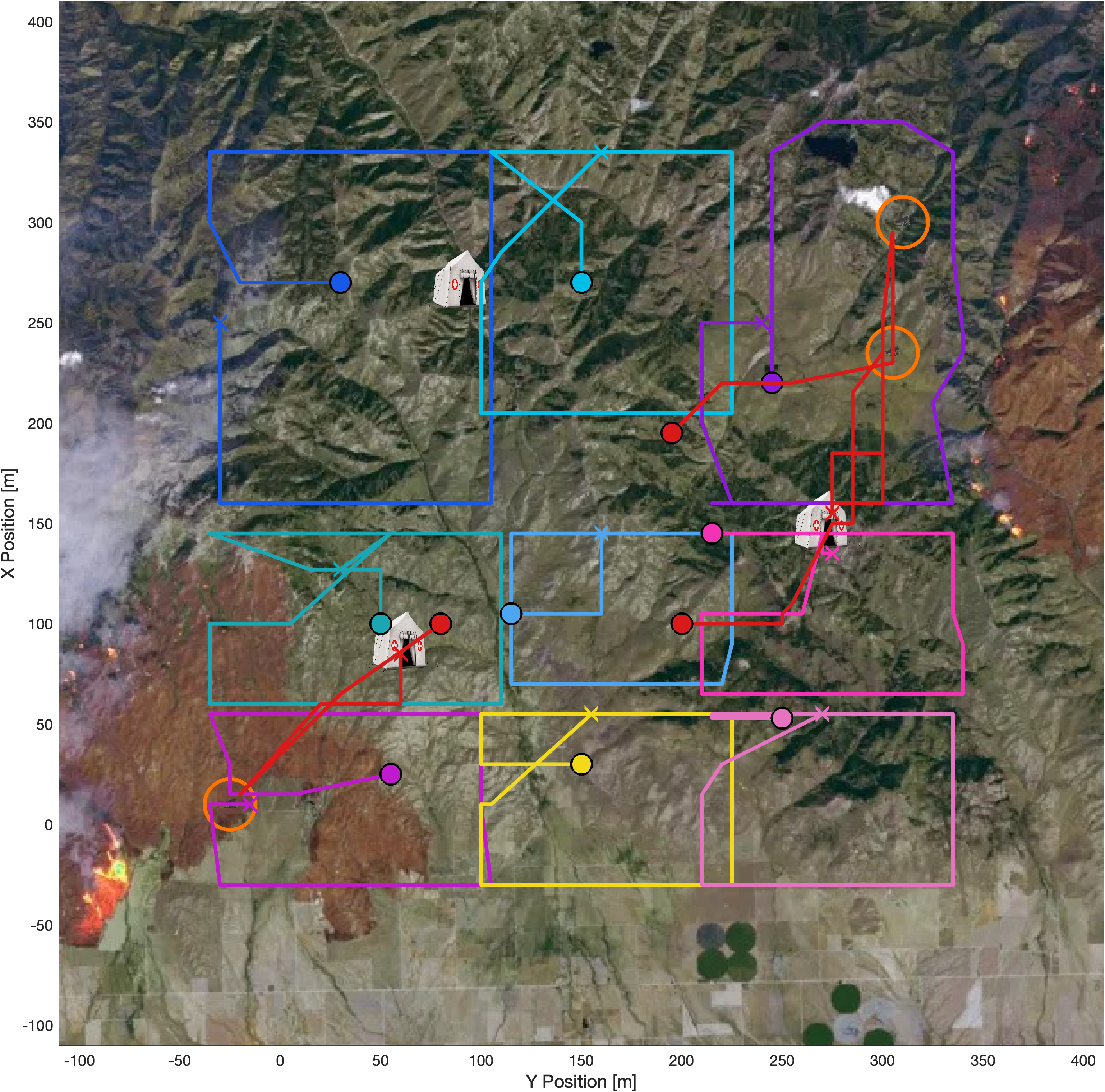}
        \caption{\small 9 locators, 3 rescuers}
    \end{subfigure}
    \caption{\small Scalability across team sizes under a linear objective.
        Increasing the number of locators improves coverage of the monitored
        region, while the rescuers adapt their paths to the denser set of
        detected emergencies.}
    \label{fig:scalability}
\end{figure*}

\begin{table*}[t]
\centering
\caption{Ablation and scalability results (solve time, number of variables and number of constraints) on SwarmLab search-and-rescue contingency-planning simulations across MIP and SMT encodings. ($|\mathcal{L}|$ and $|\mathcal{R}|$ indicate the number of locator and rescuer agents respectively. Reported times are totals over the scenario sweep ($|\Omega|{=}4$ for $|\mathcal{L}|{=}5$, $|\Omega|{=}8$ otherwise); reported model sizes are those of the largest per-scenario program. $^\dagger$Per-scenario time limit reached; best incumbent reported. SMT is satisfaction-only; objective ablations apply only to MIP.)}
\label{tab:ablation_scaling}
\setlength{\tabcolsep}{2.5pt}
\begin{tabular}{@{}llcccccccccccccc@{}}
\toprule
& & & \multicolumn{3}{c}{\textbf{MIP (No Objective)}}
& \multicolumn{3}{c}{\textbf{MIP (Linear Objective)}}
& \multicolumn{3}{c}{\textbf{MIP (Quad. Objective)}}
& \multicolumn{3}{c}{\textbf{SMT}} \\
\cmidrule(lr){4-6} \cmidrule(lr){7-9} \cmidrule(lr){10-12} \cmidrule(lr){13-15}
\textbf{Condition}
& $\boldsymbol{|\mathcal{L}|}$
& $\boldsymbol{|\mathcal{R}|}$
& \textbf{Time (s)} & \textbf{|Vars|} & \textbf{|Constr|}
& \textbf{Time (s)} & \textbf{|Vars|} & \textbf{|Constr|}
& \textbf{Time (s)} & \textbf{|Vars|} & \textbf{|Constr|}
& \textbf{Time (s)} & \textbf{|Vars|} & \textbf{|Constr|} \\
\midrule
\multirow{3}{*}{\shortstack[l]{{STL only} \\ \textit{(No graph operators)}}}
& 5 & 2 &  69 &  60k & 125k & 1862$^\dagger$ &  60k & 126k & 2591$^\dagger$ &  60k & 126k &  0.66 & 21.3k & 23.2k \\
& 7 & 3 & 177 & 105k & 146k & 3759$^\dagger$ & 106k & 148k & 5216$^\dagger$ & 106k & 148k &  1.49 & 23.8k & 26.6k \\
& 9 & 3 & 277 & 146k & 210k & 3877$^\dagger$ & 147k & 212k & 5300$^\dagger$ & 147k & 212k &  1.74 & 35.2k & 38.5k \\
\midrule
\multirow{3}{*}{\shortstack[l]{\textbf{$\mathcal{G}^{s}$} \\ \textit{(Sensing)}}}
& 5 & 2 & 151 &  74k & 143k & 1919$^\dagger$ &  74k & 144k & 2659$^\dagger$ &  74k & 144k &  2.13 & 23.2k & 25.2k \\
& 7 & 3 & 240 & 135k & 185k & 3846$^\dagger$ & 136k & 187k & 5311$^\dagger$ & 136k & 187k &  3.15 & 27.9k & 30.7k \\
& 9 & 3 & 401 & 182k & 257k & 4016$^\dagger$ & 183k & 259k & 5434$^\dagger$ & 183k & 259k &  2.61 & 40.1k & 43.4k \\
\midrule
\multirow{3}{*}{\shortstack[l]{\textbf{$\mathcal{G}^{s} + \mathcal{G}^{c}$} \\ \textit{(Sens.\ + Comm.)}}}
& 5 & 2 & 175 &  81k & 153k & 2032$^\dagger$ &  81k & 154k & 2635$^\dagger$ &  81k & 154k &  1.29 & 25.2k & 27.1k \\
& 7 & 3 & 362 & 155k & 209k & 3940$^\dagger$ & 155k & 211k & 5393$^\dagger$ & 155k & 211k &  1.80 & 32.1k & 34.8k \\
& 9 & 3 & 459 & 204k & 284k & 4154$^\dagger$ & 206k & 287k & 5786$^\dagger$ & 206k & 287k &  4.06 & 45.1k & 48.4k \\

\midrule
\multirow{3}{*}{\shortstack[l]{\textbf{$\mathcal{G}^{s}, \mathcal{G}^{c}, \mathcal{G}^{\mathrm{task}}$} \\ \textit{(Sens. + Comm + Task)}}}
& 5 & 2 &  340 &  86k & 161k & 1201$^\dagger$ &  87k & 162k & 2234$^\dagger$ &  86k & 161k &  3.56 & 22.0k & 24.0k \\
& 7 & 3 &  775 & 172k & 230k & 3750$^\dagger$ & 173k & 232k & 5809$^\dagger$ & 173k & 232k &  5.68 & 46.9k & 49.8k \\
& 9 & 3 & 1480 & 225k & 307k & 4973$^\dagger$ & 226k & 309k & 5997$^\dagger$ & 226k & 309k & 16.50 & 58.2k & 61.6k \\

\bottomrule
\end{tabular}
\end{table*}

\noindent We implement a planner for the aforementioned search-and-rescue
example. Let \(\mathcal M\) denote a fixed set of known assembly areas where
emergencies may occur. At planning time, the location of every assembly area is
known, but it is unknown whether each assembly area contains an active
emergency. Locator agents patrol the assembly areas and observe their activation
statuses when they enter the sensing range. An active emergency must then be
communicated, assigned to one or more rescuers, and resolved within bounded time
by transporting the rescued individual to the rescue center \(\mathcal C\). To
account for potential emergencies, we construct plans for all possible
emergency-activation scenarios.

The experimental instances use role-specific admissible control sets while
retaining the same motion-model structure for all agents.\footnote{Locator controls satisfy \(\agentinput_t^\ell\in\mathcal U_{\mathcal L}\), whereas rescuer controls satisfy \(\agentinput_t^r\in\mathcal U_{\mathcal R}\), with the maximum rescuer speed strictly lower than the maximum locator speed. Thus, the experimental role distinction includes different mobility limits in addition to different mission obligations.}

Let \(\Omega\subseteq\{0,1\}^{|\mathcal M|}\) denote the set of scenarios, where
\(\omega_m=1\) indicates that assembly area \(m\) contains an active emergency
in scenario \(\omega\), and \(\omega_m=0\) indicates that it does not. If all
activation combinations are considered, then \(|\Omega|=2^{|\mathcal M|}\).

For each scenario \(\omega\in\Omega\), the encoding contains a corresponding
state and control sequence. These scenario-indexed sequences are solved
jointly. Scenarios with identical locator
observation histories must have identical control inputs, i.e., if
scenarios \(\omega\) and \(\omega'\) are indistinguishable from the agents at
time \(t\), then
\(\jointcontrol_t^\omega=\jointcontrol_t^{\omega'}\). The plans may branch
only after a locator observation distinguishes the scenarios.

The resulting solution is a contingent plan represented as a finite scenario
tree. At runtime, the agents initially execute the common plan prefix. When a
locator observes whether an assembly area is active, the branch consistent
with that observation is selected, and execution continues along that branch.
Further observations may select subsequent branches.

The predicates and specifications below are instantiated for every
\(\omega\in\Omega\). We suppress the scenario superscript when it is clear
from context. In scenario \(\omega\),
\(\varphi_m^{\mathrm{emg}}\) is true iff \(\omega_m=1\).

We introduce the following atomic predicates:
$\varphi^{\mathrm{emg}}_m$ (emergency active),
$\varphi^{\mathrm{near}}_{r,m}$ (rescuer near emergency),
$\varphi^{\mathrm{atC}}_r$ (rescuer at rescue center), and
$\varphi^{\mathrm{carry}}_r$ (rescuer carrying an individual).
The joint geometric predicate
$\varphi^{\mathrm{sense}}_m$ states that at least one locator lies within
sensing range of emergency $m$. Graph-derived predicates capture the remaining
collective and relational conditions:
$\varphi^{\mathrm{LL}}_\ell$ (locator-to-locator communication) and
$\varphi^{\mathrm{LR}}_\ell$ (locator-to-rescuer communication).
Let \(\varphi_{\ell,r,m}^{\mathrm{task}}\) denote the predicate that locator
\(\ell\) has assigned rescuer \(r\) specifically to emergency \(m\) at the
current time. Its truth implies that the corresponding task edge
\((\ell,r)\in\edge_t^{\mathrm{task}}\) is active.

We fix horizons
$T_{\mathrm{det}},T_{\mathrm{assign}},T_{\mathrm{relay}},
T_{\mathrm{reach}},T_{\mathrm{deliver}}\in\mathbb{N}$. The specifications for the mission are then outlined as follows.
\begin{enumerate}[leftmargin=*, labelindent=0pt]
  \item \textit{Bounded emergency detection.}
        Every emergency must be detected by the locator swarm within bounded time:
\[
          \phi_{\mathrm{detect}}
          := \Globally\;
          \bigwedge_{m\in\mathcal{M}}
          \Big(
          \varphi^{\mathrm{emg}}_m
          \Rightarrow
          \Eventually_{[0,T_{\mathrm{det}}]}\;
          \varphi^{\mathrm{sense}}_m
          \Big).
\]     
\item \textit{Detection-to-relay.}
Once an emergency is detected, the information must be communicated to either
another locator or a rescuer within bounded time:
\[
\resizebox{\linewidth}{!}{$\displaystyle
  \phi_{\mathrm{relay}}
  := \Globally\;
  \bigwedge_{m\in\mathcal{M}}
  \Big(
  \varphi^{\mathrm{emg}}_m \wedge \varphi^{\mathrm{sense}}_m
  \Rightarrow
  \Eventually_{[0,T_{\mathrm{relay}}]}\,
  \bigvee_{\ell\in\mathcal L}
  (\varphi^{\mathrm{LL}}_\ell \,\vee\, \varphi^{\mathrm{LR}}_\ell)
  \Big).
$}
\]
  \item \textit{Detection-to-assignment.}
        Once an emergency is detected, at least one locator must assign a rescuer within
        bounded time:
\[
\resizebox{\linewidth}{!}{$\displaystyle
          \phi_{\mathrm{assign}}
          := \Globally\;
          \bigwedge_{m\in\mathcal{M}}
          \Big(
          \varphi^{\mathrm{emg}}_m \wedge \varphi^{\mathrm{sense}}_m
          \Rightarrow
          \Eventually_{[0,T_{\mathrm{assign}}]}\;
          \bigvee_{\ell\in\mathcal L}\;
          \bigvee_{r\in\mathcal R}
          \varphi^{\mathrm{task}}_{\ell,r,m}
          \Big).
$}
\]
  \item \textit{Rescuer response time.}
        If a locator assigns a rescuer, that rescuer must reach the emergency within a
        bounded time:
     \begin{align*}
         \phi_{\mathrm{reach}}
:=
\Globally
\bigwedge_{\ell\in\mathcal L}
\bigwedge_{r\in\mathcal R}
\bigwedge_{m\in\mathcal M}
\Big(
\varphi_{\ell,r,m}^{\mathrm{task}}
\wedge
\varphi_m^{\mathrm{emg}}
\Rightarrow \\
\Eventually_{[0,T_{\mathrm{reach}}]}
\varphi^{\mathrm{near}}_{r,m}
\Big).
     \end{align*}

  \item \textit{Rescue and deliver.}
        After reaching the emergency, the rescuer must deliver the rescued individual to the rescue
        center:
\[
 \begin{aligned}
 \phi_{\mathrm{deliver}}
      & := \Globally\;
      \bigwedge_{\ell\in\mathcal L}
      \bigwedge_{r\in\mathcal R}
      \bigwedge_{m\in\mathcal M}
      \Big(
      \varphi_{\ell,r,m}^{\mathrm{task}}
      \wedge
      \varphi^{\mathrm{near}}_{r,m}
      \Rightarrow \\
      & \quad\quad \Eventually_{[0,T_{\mathrm{deliver}}]}\;
      (\varphi^{\mathrm{carry}}_r \wedge \varphi^{\mathrm{atC}}_r)
      \Big).
\end{aligned}
 \]
\end{enumerate}
\noindent For each scenario \(\omega\in\Omega\), let
\(\phi^\omega\) denote the conjunction of the five mission specifications
above under the activation assignment \(\omega\). The scenario-based planning
problem requires
\(\bigwedge_{\omega\in\Omega}\phi^\omega\).
\noindent All experiments were conducted using Gurobi~\cite{gurobi} (for MIP) and Z3~\cite{z3} (for SMT), and were executed on a compute cluster with 16 CPU cores and 64 GB of memory. The simulations were performed using a customized SwarmLab~\cite{swarmlab} framework, extended to incorporate our environment and dynamics and to execute the synthesized trajectories.\footnote{Satellite terrain image courtesy of NASA Earth Observatory (\url{https://earthobservatory.nasa.gov}).}

\noindent\textbf{Effect of objective function.}
Figure~\ref{fig:objectives} shows the effect of the objective on rescuer trajectories with 5 locators and 2 rescuers. Without an objective (a), the MIP returns an arbitrary feasible plan; the linear objective (b) and quadratic objective (c) progressively shape the rescuer toward more direct paths.\footnote{The linear objective is the rescuer $L_1$ path cost $J_1=\sum_{r\in\mathcal R}\sum_{t=0}^{T-1}\|\pos_{t+1}^r-\pos_t^r\|_1$. The quadratic objective is the squared $L_2$ step cost $J_2=\sum_{r\in\mathcal R}\sum_{t=0}^{T-1}\|\pos_{t+1}^r-\pos_t^r\|_2^2$.} Since SMT is satisfaction-only, this comparison is specific to the MIP encoding and motivates retaining MIP for objective-driven planning despite its larger encoding footprint.

\noindent \textbf{Scalability Evaluation with Increasing Graph Complexity and Team Size.}
We evaluate the incremental 
impact of graph-dependent constraints on planning complexity by progressively increasing specification complexity while keeping the environment, dynamics, and objective fixed. Starting from a baseline of STL predicates (no graph operators), we add (i) sensing-neighborhood constraints from a time-varying sensing graph, (ii) communication-neighborhood constraints from a time-varying communication graph, and (iii) task constraints from a time-varying task graph. For each case, we further vary the number of locator and rescuer agents. (See Fig.~\ref{fig:scalability})
 
\noindent \textbf{Results.}
Results are presented in Table~\ref{tab:ablation_scaling}, which compares the numbers of variables and constraints and the time taken to find a solution across all ablations for both methods. STL specifications without graph operators scale comparatively well across both solvers and serve as a clear baseline. Adding sensing graphs increases complexity by coupling continuous agent states with logical satisfaction; communication graphs amplify this effect by enforcing pairwise proximity constraints that require joint reasoning over agent configurations; task graphs with decision-dependent assignments yield a large increase in solve time and problem size for both solvers.
Quantitatively, SMT encodings result in fewer variables and constraints and exhibit faster solve times. MIP encodings, despite larger size and longer solve times, are required for objective-driven planning (Fig.~\ref{fig:objectives}); for the hardest configuration ($|\mathcal{L}|=9, |\mathcal{R}|=3$, all graphs, quadratic objective), the time limit was reached before optimality could be established, and Table~\ref{tab:ablation_scaling} reports the best incumbent.




\section{Discussion}
\label{sec:conc}
\noindent \textbf{Related Work.} 
STL has been widely used for multi-agent control and planning via optimization-based synthesis, including robustness-aware feedback formulations~\cite{FormalMethodsMultiAgent,PPCSTL} and MIP encodings with abstractions such as timed waypoints~\cite{MultiAgentSTLWaypoints}.
These approaches focus on individual agent trajectories and continuous dynamics, and do not explicitly capture graph-based spatial relations or collective constraints. To incorporate spatial structure, spatio-temporal logics such as STREL introduce graph-based reachability and escape operators~\cite{STREL,STRELDynamicNetworks}, enabling the specification and monitoring of collective behaviors.
Learning-based and synthesis approaches from STREL have also been explored~\cite{NNSTREL}, but these works do not address optimization-based planning via MIP or SMT.

\noindent Several works employ SMT- or SAT-based encodings for multi-agent planning under temporal logic constraints. Compositional synthesis from safe LTL fragments using SMT improves scalability and correctness guarantees~\cite{SMTMultiRobotSafeLTL}, while online or incremental planning under LTL has also been studied~\cite{OnlineMultiRobotLTL}. Optimization-based planning has further been explored via lazy SMT and MIP formulations~\cite{shoukry2016scalable,SMTMultiRobotSafeLTL} and SAT-based convex optimization~\cite{shoukry2017linear}, while MIP-based waypoint formulations enable long-horizon multi-agent STL planning~\cite{MultiAgentSTLWaypoints}.
Capability Temporal Logic (CaTL) integrates temporal logic with task allocation, routing, and resource constraints for heterogeneous teams~\cite{ProbabilisticCaTLCoordination,CaTLResourceConstraints}, typically formulating planning as a combinatorial optimization problem over assignments and schedules.
While CaTL adopts a centralized planning perspective similar to ours, its operators focus on capabilities and task satisfaction rather than spatio-temporal reasoning over interaction graphs.

\noindent \textbf{Comparison with HyperLTL.} Hyperproperties have also been used to describe specifications for
multi-agent systems~\cite{hsu2025hyprl,wang2020hyperproperties,finkbeiner2023logics}.
Closest to our setting is the work
of~\cite{wang2020hyperproperties}, who use HyperLTL to plan for
multi-robot systems with relational objectives, and
HypRL~\cite{hsu2025hyprl}, which learns control policies from
hyperproperty specifications in the decentralized model-free setting (ours is centralized and model-based).

\noindent Beyond this, HyperLTL is a propositional logic: it
admits only Boolean atomic propositions, and joint predicates over
multiple agents reduce to syntactic sugar for disjunctions of single-agent
predicates over discrete, grid-like environments. 
\noindent Encoding our specifications in HyperLTL also requires \emph{reifying} every pointwise agent choice appearing inside a temporal operator, since HyperLTL admits only trace quantifiers at the outermost prefix. For example, the locator and rescuer choices in the assignment specification $\phi_{\mathrm{assign}}$ are expanded as disjunctions over the fixed sets $\mathcal L$ and $\mathcal R$,
\[
  \phi_{\mathrm{assign}}
  =
  \begin{multlined}[t]
  \forall \pi_m.\,
  \Globally\Big(
    \varphi^{\mathrm{emg},\pi_m}\wedge
    \varphi^{\mathrm{sense},\pi_m}
    \!\Rightarrow\!\\
    \Eventually_{[0,T_{\mathrm{assign}}]}\!
    \bigvee_{\ell \in \mathcal{L}}\!
    \bigvee_{r \in \mathcal{R}}\!
    \mathsf{task}^{\pi_\ell,\pi_r,\pi_m}
  \Big),
  \end{multlined}
\]
or a Skolemization that lifts the existentials to the prefix at the cost
of one alternation, yielding a $\forall\exists$ formula outside the
alternation-free fragment. The reach and delivery formulas, by contrast,
use fixed universal prefixes but require joint predicates such as
$\varphi^{\mathrm{task}}_{\ell,r,m}$ and
$\varphi^{\mathrm{near}}_{r,m}$ to be represented as multi-trace atomic
propositions. STL-GO evaluates these finite agent choices pointwise and
supports joint predicates directly; the full reifications are deferred to
Appendix~\ref{app:hyperprops}.

\noindent Additionally, to compare the encodings of HyperLTL and STL-GO empirically, we
adapt the wildfire-rescue grid-world benchmark from
HypRL~\cite{hsu2025hyprl}, where the authors consider
$\mathcal{N}\times \mathcal{N}$ grids where $3\leq\mathcal{N}\leq 10$. A firefighter agent must extinguish all fire
cells while a medic agent rescues all victim cells; the two agents must
stay within a bounded communication range, and the medic cannot
enter a fire cell until the firefighter has extinguished it. We encode the
mission in HyperLTL and STL-GO and solve each under MIP and SMT
backends; the full specification and encoding details
are reported in Appendix~\ref{app:hyperprops}.
The results in Table~\ref{tab:hyperltl-scaling}
show that as the grid grows, HyperLTL encodings incur an $O(T \cdot N^4)$
constraint blow-up from the pairwise-cell enumeration required by the
communication-range constraint: by $N{=}10$, HyperLTL + SMT
generates 355k constraints versus 2.4k for STL-GO + SMT. STL-GO's graph operators encode the same coordination through
real-valued predicates, with constraint counts growing
linearly in $T$.
\begin{table}[t]
\centering
\caption{Comparison of HyperLTL and STL-GO encodings on $\mathcal{N}{\times}\mathcal{N}$ grids ($T=4(\mathcal{N}{-}1)+2$).}
\label{tab:hyperltl-scaling}
\setlength{\tabcolsep}{4pt}
\begin{tabular}{@{}cl rrr@{}}
\toprule
$\mathcal{N}$ & Method & |Var.| & |Constr.| & Time (s) \\
\midrule
\multirow{4}{*}{5 $\times$ 5}
 & HyperLTL + SMT           &     38           &   8.6k           &  0.10          \\
 & \textbf{STL-GO + SMT}    & \textbf{395}     & \textbf{888}     & \textbf{0.15}  \\
 & HyperLTL + MIP           &  1.0k            &   8.8k           &  0.20          \\
 & \textbf{STL-GO + MIP}    & \textbf{1.2k}    & \textbf{12.7k}   & \textbf{0.10}  \\
\midrule
\multirow{4}{*}{7 $\times$ 7}
 & HyperLTL + SMT           &     54           &  54.2k           &  0.61          \\
 & \textbf{STL-GO + SMT}    & \textbf{671}     & \textbf{1.4k}    & \textbf{0.34}  \\
 & HyperLTL + MIP           &  2.8k            &  54.7k           &  2.39          \\
 & \textbf{STL-GO + MIP}    & \textbf{3.0k}    & \textbf{65.3k}   & \textbf{0.35}  \\
\midrule
\multirow{4}{*}{10 $\times$ 10}
 & HyperLTL + SMT           &     78           & 355.1k           &  4.26          \\
 & \textbf{STL-GO + SMT}    & \textbf{1.2k}    & \textbf{2.4k}    & \textbf{0.99}  \\
 & HyperLTL + MIP           &  8.2k            & 356.2k           &  4.47          \\
 & \textbf{STL-GO + MIP}    & \textbf{8.5k}    & \textbf{387.5k}  & \textbf{1.96}  \\
\bottomrule
\end{tabular}
\end{table}

\noindent\textbf{Conclusions.}
We presented a synthesis framework for spatio-temporal logical specifications with graph operators on multi-agent systems. The MIP and SMT encodings of STL-GO enable centralized planning over dynamic interaction graphs, with soundness guarantees. Our simulations demonstrate trajectory synthesis under complex STL-GO specifications involving multiple dynamic graphs and role-typed agents.

\noindent \textbf{Limitations and Future Work.} Our soundness guarantees rely on deterministic environment and agent
dynamics. In stochastic settings, the synthesized plans can
only be guaranteed to satisfy the specification with some probability,
and encodings for distributionally
robust formulations are an extension. As we synthesize open-loop
control sequences, embedding
encodings inside a receding-horizon loop or learning
policies that respect STL-GO specifications are open directions. Furthermore, decentralized synthesis under partial
observability, together with decomposition strategies to mitigate the
combinatorial growth in the number of graphs, remains for future
work.
\section{Acknowledgements}
This work was supported by the National Science Foundation under Grant IIS-SLES-2417075 and Lockheed Martin Advanced Technology Laboratories.

\bibliographystyle{IEEEtran}
\bibliography{references}

\appendices
  
\section{Theoretical Results for MIP Encoding}
\label{sec:appendix-theory-milp}
\subsection{Proof of Lemma~\ref{lemma:milp-graph} (MIP, Agent-Local)}
\label{sec:appendix-proof-milp-lemma}
\begin{proof}
We prove the invariant~\eqref{eq:milp-invariant} for every agent-local subformula $\psi$, agent $i$, and time $t$, by structural induction on $\psi$.

\mypara{Dynamics}
Constraint~\eqref{eq:affine-agent-dyn} is an equality in the decision variables; any feasible assignment to $\{\agentstate^i_t, \agentinput^i_t\}_{i,t}$ corresponds to a trajectory of $F$, with~\eqref{eq:state_input_constraints} restricting states and inputs to their admissible sets.

\mypara{Base case ($\psi = \mu_x$)}
The Big-$M$ encoding (Appendix~\ref{sec:appendix-MILP-encoding}) introduces $z_{\mu_x, i, t} \in \{0,1\}$ with $M$ larger than $\sup_{x \in [\agentstate_{\min}, \agentstate_{\max}]} |a^\top x - b|$ and assumes feasible predicate values avoid the separation interval $(-\varepsilon,0)$. It enforces $z_{\mu_x, i, t} = 1 \iff a^\top \agentstate^i_t \ge b$, which is the definition of $(\mas, i, t) \models \mu_x$.

\mypara{Boolean connectives and until operator}
Negation uses $z_{\neg\varphi, i, t} = 1 - z_{\varphi, i, t}$; conjunction and disjunction use standard linear encodings; when $t+b\leq T$, until uses witness-time indicators $\beta^i_{\tau, t}$ enforcing the existence of $\tau \in t \oplus I$ with $z_{\varphi_2, i, \tau} = 1$ and $z_{\varphi_1, i, \tau'} = 1$ for all $\tau' \in [t, \tau)$; when $t+b>T$, its satisfaction variable is fixed to zero~\cite{stl-to-milp1,stl-to-milp2}.

\mypara{Graph operator ($\psi = \Inop_{\graphs, [e_1, e_2]}^{W, \#} \varphi$)}
We establish the invariant for each graph type $\type\in\settypes$, then extend to the modal quantifier.

\emph{(i) Eligibility.} By MIP-encodability of $\gen^{\type}$, the edge
indicator $a_{j,i,t}^{\type}$ exactly represents
$\eta_{j,i}^{\type}(\jointstate_t,\world_t)$ and the weight
$\weights_t^{\type}(j,i)$ is a PWA expression in the MIP variables. With
$M > \max(|w_{\min}|, |w_{\max}|) + \sup |\weights_t^{\type}(j,i)|$ and the
stated separation margin,~\eqref{eq:enc-in1} enforces
$\gamma_{j,i,t}^{\type}=1$ if and only if $(j,i)\in\edge_t^{\type}$ and
$\weights_t^{\type}(j,i)\in W$.

\emph{(ii) Conjunction.}~\eqref{eq:enc-in2} is the standard linear encoding of $y_{j,i,t}^{\varphi,\type} = \gamma_{j,i,t}^{\type} \wedge z_{\varphi, j, t}$; by inductive hypothesis on $\varphi$ at $j$, $y_{j,i,t}^{\varphi,\type} = 1$ iff edge $(j, i)$ is eligible and $(\mas, j, t) \models \varphi$.

\emph{(iii) Cardinality.} The integer count $c_{i, t}^{\Inop, \varphi, \type} = \sum_{j \ne i} y_{j, i, t}^{\varphi, \type}$ is the cardinality of the qualifying neighbor set. Since $c$ is integer-valued, the strict inequalities $c < e_1$ and $c > e_2$ are equivalent to $c \le e_1 - 1$ and $c \ge e_2 + 1$; thus, the unit integer separation in~\eqref{eq:enc-in3}--\eqref{enc-in4} enforces $z_{\psi, i, t}^{\type} = 1 \iff c \in [e_1, e_2]$.

\emph{(iv) Quantification over types.}~\eqref{eq:enc-in5} for $\# = \exists$ and~\eqref{eq:enc-in6} for $\# = \forall$ are the standard disjunction and conjunction encodings over per-type variables.
The outgoing case is obtained by substituting $(j, i)$ with $(i, j)$ in (i)--(iii).

By induction,~\eqref{eq:milp-invariant} holds for every subformula at every $(i, t)$; the encoding asserts $z_{\phi, i, 0} = 1$ for the agent-local root, so feasibility yields $(\mas, i, 0) \models \phi$.
\end{proof}

\subsection{Proof of Theorem~\ref{thm:milp-soundness}}
\label{sec:appendix-proof-milp-theorem}

\begin{proof}
    
We extend~\eqref{eq:milp-invariant} to multi-agent subformulas as
$z_{\psi, t} = 1 \iff (\mas, t) \models \psi$,
by structural induction on the multi-agent grammar.

\mypara{Joint atomic predicate ($\psi = \mu$)}
The Big-$M$ encoding applied to $\mu(\jointstate_t, \world_t)$ enforces $z_{\mu, t} = 1 \iff \mu(\jointstate_t, \world_t)$ by the same argument as the agent-local atomic case.

\mypara{Embedding ($\psi = i.\varphi$)}
The encoding sets $z_{i.\varphi, t} := z_{\varphi, i, t}$, which by Lemma~\ref{lemma:milp-graph} equals the truth of $\varphi$ at $(i, t)$.

\mypara{Boolean connectives and until operator}
Identical in form to the agent-local cases of Lemma~\ref{lemma:milp-graph}, applied to multi-agent variables.

\mypara{Multi-agent quantifiers ($\psi = \EX \varphi, \FA \varphi$)}
The encodings in Section~\ref{sec:quant-specs-milp} enforce $z_{\psi, t} = 1 \iff \bigvee_{i \in \nodes} z_{\varphi, i, t} = 1$ and $z_{\psi, t} = 1 \iff \bigwedge_{i \in \nodes} z_{\varphi, i, t} = 1$. By Lemma~\ref{lemma:milp-graph}, these match the multi-agent semantics.

By induction the invariant holds at the root: $z_{\phi, 0} = 1 \iff (\mas, 0) \models \phi$, and feasibility with $z_{\phi, 0} = 1$ as a constraint yields $(\mas, 0) \models \phi$.
\end{proof}

\section{Theoretical Results of SMT Encoding}
\label{sec:appendix-theory-smt}
\subsection{Proof of Lemma~\ref{lemma:smt-graph} (SMT, Agent-Local)}
\label{sec:appendix-proof-smt-lemma}

\begin{proof}
We prove the invariant~\eqref{eq:smt-invariant} for every agent-local subformula $\psi$, agent $i$, and time $t$, by structural induction on $\psi$. The argument mirrors the proof of Lemma~\ref{lemma:milp-graph}, with Big-$M$ encodings replaced by LRA biconditionals.

\mypara{Dynamics}
Constraint~\eqref{eq:smt-dynamics} is an equality in the SMT variables; any satisfying assignment corresponds to a trajectory of $F$, with~\eqref{eq:smt-state-input-bounds} restricting states and inputs.

\mypara{Base case, Boolean connectives, and until operator}
Atomic predicates use $z^i_{\mu_x, t} = (a^\top \agentstate^i_t - b \ge 0)$. Boolean connectives use the corresponding propositional biconditionals. When $t+b\leq T$, until is unrolled over the bounded horizon as $z^i_{\psi, t} = \bigvee_{\tau = t+a}^{t+b} \big(z^i_{\varphi_2, \tau} \wedge \bigwedge_{k=t}^{\tau - 1} z^i_{\varphi_1, k}\big)$; when $t+b>T$, its satisfaction variable is fixed to $\bot$~\cite{momtaz2023monitoring,prabhakar2018automatic}. Each case transfers~\eqref{eq:smt-invariant} via the inductive hypothesis.

\mypara{Graph operator ($\psi = \Inop_{\graphs, [e_1, e_2]}^{W, \#} \varphi$)}
For each $\type\in\settypes$, SMT-encodability of $\gen^{\type}$ ensures that the edge-existence predicate
$\eta_{j,i}^{\type}(\jointstate_t,\world_t)$ is an LRA+LIA Boolean formula and
the weight $e_{j,i}^{\type}(\jointstate_t,\world_t)$ is an LRA expression.

\emph{(i) Eligibility.}~\eqref{eq:smt-eligible} asserts
$b^{\type}_{j,i,t}\leftrightarrow
(\eta^{\type}_{j,i}\wedge w_{\min}\le e^{\type}\le w_{\max})$,
giving $b^{\type}_{j,i,t}=\top$ if and only if $(j,i)$ is an edge and
$\weights_t^{\type}(j,i)\in W$.

\emph{(ii) Conjunction.}~\eqref{eq:smt-conjunct} asserts $n^{\type}_{j,i,t} = b^{\type}_{j,i,t} \wedge z^j_{\varphi, t}$; by inductive hypothesis, $n^{\type}_{j,i,t} = \top$ iff edge $(j, i)$ is eligible and $(\mas, j, t) \models \varphi$.

\emph{(iii) Cardinality.}~\eqref{eq:smt-cardinality-satisfaction} asserts $z^{i, \type}_{\psi, t} = (e_1 \le c^{\type}_{i,t} \le e_2)$, where $c^{\type}_{i,t}$, defined by~\eqref{eq:smt-cardinality}, counts the qualifying neighbors via $\mathsf{ite}$.

\emph{(iv) Quantification over types.}~\eqref{eq:smt-graph-quant} asserts the disjunctive biconditional for $\#=\exists$; replacing the disjunction with a conjunction yields the biconditional for $\#=\forall$.
The outgoing case is obtained by substituting $(j, i)$ with $(i, j)$ in (i)--(iii).

By induction,~\eqref{eq:smt-invariant} holds for every subformula at every $(i, t)$; the encoding asserts $z^i_{\phi, 0} = \top$ for the agent-local root, so satisfiability yields $(\mas, i, 0) \models \phi$.
\end{proof}

\subsection{Proof of Theorem~\ref{thm:smt-soundness} (SMT, Multi-Agent)}
\label{sec:appendix-proof-smt-theorem}

\begin{proof}
We extend~\eqref{eq:smt-invariant} to multi-agent subformulas as $z_{\psi, t} = \top \iff (\mas, t) \models \psi$, by structural induction on the multi-agent grammar.

\mypara{Joint atomic predicate ($\psi = \mu$)}
The LRA $z_{\mu, t} = \mu(\jointstate_t, \world_t)$ matches the multi-agent atomic semantics.

\mypara{Embedding ($\psi = i.\varphi$)}
The encoding sets $z_{i.\varphi, t} := z^i_{\varphi, t}$, which by Lemma~\ref{lemma:smt-graph} equals the truth of $\varphi$ at $(i, t)$.

\mypara{Boolean connectives and until operator}
Identical in form to the agent-local cases of Lemma~\ref{lemma:smt-graph}, applied to multi-agent variables.

\mypara{Multi-agent quantifiers ($\psi = \EX \varphi, \FA \varphi$)}
The encoding of the multi-agent quantifiers $\EX$ and $\FA$ asserts $z_{\psi, t} = \bigvee_{i \in \nodes} z^i_{\varphi, t}$ and $z_{\psi, t} = \bigwedge_{i \in \nodes} z^i_{\varphi, t}$. By Lemma~\ref{lemma:smt-graph}, these match the multi-agent quantifier semantics.

By induction the invariant holds at the root: $z_{\phi, 0} = \top \iff (\mas, 0) \models \phi$, and satisfiability with $z_{\phi, 0} = \top$ as a constraint yields $(\mas, 0) \models \phi$.
\end{proof}

\section{Full Mixed Integer Encoding for STL-GO}
\label{sec:appendix-MILP-encoding}
\mypara{Predicate Encoding}
Atomic predicates in STL-GO formulas are inequalities of the form
$ \psi := \mu(\sysstate_t^i)$ where $ \mu(\sysstate_t^i) \equiv a^\top \sysstate_t^i - b \ge 0$, representing a geometric or
logical condition on the agent’s state (for instance, being within a
goal region or a communication range).
Each predicate $\psi$ is associated with a binary variable
$z_{\psi,t}^i \in \{0,1\}$ indicating whether $\psi$ holds for agent $i$ at time
$t$. The relationship between the
constraint and the binary indicator is established using the Big-$M$ method:
\[
a^\top \sysstate_t^i - b \ge -M (1 - z_{\psi,t}^i); \quad
a^\top \sysstate_t^i - b \le -\varepsilon + M z_{\psi,t}^i \]
where $M > 0$ is a sufficiently large constant and $\varepsilon > 0$ is a small
positive separation margin. We assume feasible predicate values do not lie in
$(-\varepsilon,0)$. These constraints ensure that
$z_{\psi,t}^i = 1$ if and only if $\psi$ holds.

\mypara{Logical Operators}

Let $\psi = \operatorname{Op}(\varphi_1,\ldots,\varphi_m)$ denote a formula
obtained by applying a logical operator to subformulas
$\varphi_1,\ldots,\varphi_m$.
For each agent $i$ and time $t$, we introduce a binary variable
$z_{\psi,t}^i \in \{0,1\}$ encoding the truth value of $\psi$.
Standard linear encodings are used:
\begin{itemize}\setlength{\itemsep}{1pt}
    \item \textbf{Negation: ($\psi=\neg\varphi$)}
    \(
    z_{\psi,t}^i = 1 - z_{\varphi,t}^i.
    \)
    \item \textbf{Conjunction:}($\psi=\bigwedge_{j=1}^m \varphi_j$)
     \(
    z_{\psi,t}^i \le z_{\varphi_j,t}^i \ \forall j, \ \ 
    z_{\psi,t}^i \ge 1 - m + \sum_{j=1}^m z_{\varphi_j,t}^i.
    \)
    \item \textbf{Disjunction:} ($\psi=\bigvee_{j=1}^m \varphi_j$)
    \(
    z_{\psi,t}^i \ge z_{\varphi_j,t}^i \ \forall j, \ \ 
    z_{\psi,t}^i \le \sum_{j=1}^m z_{\varphi_j,t}^i.
    \)
\end{itemize}
\mypara{Temporal Operators}

Temporal modalities in STL-GO are represented over bounded time horizons using recursive constraints on binary variables. The fundamental operator is \emph{until} ($\Until$); \emph{eventually}
($\Eventually$) and \emph{always} ($\Globally$) are defined as
\(
\Eventually_I \varphi := \top\,\Until_I\,\varphi
\) and
$\Globally_I\varphi := \neg\Eventually_I\neg\varphi$.

\begin{itemize}
\item \textbf{Until.}
For $\psi=\varphi_1\,\Until_{[a,b]}\,\varphi_2$ and $t+b\leq T$,
satisfaction requires that $\varphi_2$ becomes true at some time
$\tau\in[t+a,t+b]$ and that $\varphi_1$ holds at every time in
$[t,\tau)$. Introduce auxiliary variables
$\beta_{\tau,t}^i\in\{0,1\}$, one for each candidate witness time:
\begin{equation}
\begin{gathered}
\beta_{\tau,t}^i \le z_{\varphi_2,\tau}^i,\\
\beta_{\tau,t}^i \le z_{\varphi_1,k}^i
\quad \forall k\in[t,\tau-1],\\
\beta_{\tau,t}^i
\ge z_{\varphi_2,\tau}^i
+\sum_{k=t}^{\tau-1}z_{\varphi_1,k}^i-(\tau-t),
\\
z_{\psi,t}^i \ge \beta_{\tau,t}^i
\quad \forall \tau\in[t+a,t+b],\\
z_{\psi,t}^i \le \sum_{\tau=t+a}^{t+b}\beta_{\tau,t}^i.
\end{gathered}
\end{equation}
For $t+b>T$, the strong bounded-horizon semantics are enforced by
$z_{\psi,t}^i=0$.

    \item \textbf{Eventually.} For $\psi=\Eventually_{[a,b]}\varphi$, apply the derivation $\Eventually_{[a,b]} \varphi := \top\,\Until_{[a,b]}\,\varphi$ and the encoding simplifies since
 $z_{\top,t}^i = 1$ always holds:
For $t+b\leq T$, impose
\begin{equation}
\begin{aligned}
z_{\psi,t}^i &\ge z_{\varphi,\tau}^i
&&\forall \tau\in[t+a,t+b],\\
z_{\psi,t}^i &\le \sum_{\tau=t+a}^{t+b} z_{\varphi,\tau}^i.
\end{aligned}
\end{equation}
For $t+b>T$, impose $z_{\psi,t}^i=0$.
    This ensures $z_{\psi,t}^i=1$ if $\varphi$ holds at least once within $[t+a,t+b]$.
    
    \item \textbf{Globally:}
  For $\psi = \Globally_{[a,b]} \varphi$, apply the derivation $\Globally_{[a,b]} \varphi := \neg\Eventually_{[a,b]} \neg\varphi$. 
For $t+b\leq T$, impose
\begin{equation}
\begin{aligned}
z_{\psi,t}^i &\le z_{\varphi,\tau}^i
&&\forall \tau\in[t+a,t+b],\\
z_{\psi,t}^i &\ge \sum_{\tau=t+a}^{t+b} z_{\varphi,\tau}^i-(b-a).
\end{aligned}
\end{equation}
For $t+b>T$, impose $z_{\psi,t}^i=1$.
    This enforces $z_{\psi,t}^i=1$ only if $\varphi$ holds at all times in $[t+a,t+b]$.
\end{itemize}

\section{Full SMT Encoding for STL-GO}
\label{sec:appendix-SMT-encoding}

\noindent\mypara{Atomic Predicates}
For each atomic predicate $\mu$, we associate a Boolean satisfaction variable 
$z_{\mu,t}^{i} \in \{\mathit{true},\mathit{false}\}$,
to indicate the satisfaction of $\mu$ for agent $i$ at time $t$. 
We encode the semantics of the predicate by asserting the following logical equivalence 
in the Theory of Linear Real Arithmetic (LRA):
\[
z_{\mu,t}^i = \mathit{true} \;\leftrightarrow\; (a^\top x_t^i - b \ge 0).
\]
This constraint couples the discrete Boolean structure with the continuous state vector $x_t^i$,
allowing the solver to reason about system states.

\noindent\mypara{Logical operators}

\begin{itemize}
\item \textbf{Negation.}
We encode a formula $\psi = \neg \varphi$ by imposing the constraint
\begin{equation}
    z_{\psi, t}^{i} = \neg z_{\varphi, t}^{i}
\end{equation}
for each agent $i$ and time $t$. This enforces that the negated formula holds at $(i,t)$ if and only if the inner subformula does not hold at $(i,t)$.

\item \textbf{Conjunction.}
For a formula $\psi = \varphi_1 \wedge \varphi_2$, we add the constraint
\begin{equation}
z_{\psi, t}^{i} = z_{\varphi_1, t}^{i} \wedge z_{\varphi_2, t}^{i}
\end{equation}

\item \textbf{Disjunction.}
Similarly, $\psi = \varphi_1 \vee \varphi_2$ is encoded as
\begin{equation}
z_{\psi, t}^{i} = z_{\varphi_1, t}^{i} \vee z_{\varphi_2, t}^{i}
\end{equation}
\end{itemize}

\noindent\mypara{Temporal operators}
Temporal operators are encoded by imposing constraints on $z_{\psi, t}^{i}$ to satisfy subformulas within the time interval. 
We eliminate temporal quantifiers by unrolling them into finite conjunctions and disjunctions over the bounded time horizon, thereby generating a quantifier-free encoding.


\begin{itemize}
\item \textbf{Until.}
For a formula $\psi = \varphi_1 \, \Until_{[a,b]}\, \varphi_2$,
its discrete-time semantics is captured by the constraint
\begin{equation}
z_{\psi,t}^{i}=
\begin{cases}
\displaystyle\bigvee_{\tau=t+a}^{t+b}
\left(
z_{\varphi_2,\tau}^{i}
\wedge
\displaystyle\bigwedge_{k=t}^{\tau-1}z_{\varphi_1,k}^{i}
\right), & t+b\leq T,\\
\mathit{false}, & t+b>T.
\end{cases}
\end{equation}
This enforces that $\varphi_2$ becomes true at some time $\tau$ within the interval,
and that $\varphi_1$ holds continuously until that time.

\item \textbf{Eventually.}
Consider a formula $\psi = \Eventually_{[a,b]} \varphi$, where $0 \le a \le b$,
the Eventually operator holds at time $t$ if and only if the inner subformula
holds at least once within the time interval. We encode this operator by the constraint
\begin{equation}
z_{\psi,t}^{i}=
\begin{cases}
\displaystyle\bigvee_{\tau=t+a}^{t+b}z_{\varphi,\tau}^{i},
& t+b\leq T,\\
\mathit{false}, & t+b>T.
\end{cases}
\end{equation}

\item \textbf{Always.}
We encode the formula $\psi = \Globally_{[a,b]} \varphi$ by
\begin{equation}
z_{\psi,t}^{i}=
\begin{cases}
\displaystyle\bigwedge_{\tau=t+a}^{t+b}z_{\varphi,\tau}^{i},
& t+b\leq T,\\
\mathit{true}, & t+b>T.
\end{cases}
\end{equation}
This ensures that the Always operator holds at time $t$ exactly when the inner subformula
holds at all time steps within the interval.
\end{itemize}

\section{Comparison to HyperLTL}
\label{app:hyperprops}

The specifications introduced in Section~\ref{sec:preliminaries} are stated in
STL-GO, where agent-level quantifiers such as $\EX$ are evaluated over $\nodes$
\emph{pointwise} at the temporal instant under consideration. The mission also
uses finite conjunctions over the fixed locator, rescuer, and emergency-site
sets, together with predicates derived from the time-varying task graph.
HyperLTL, by contrast, quantifies exclusively over \emph{traces} at the
outermost prefix, so any agent-level quantifier that appears inside a temporal
operator in STL-GO must be \emph{reified}: it must be lifted out of the temporal
scope and re-expressed using outer trace quantifiers, auxiliary atomic
propositions, or finite disjunctions over a fixed agent universe. 
This appendix makes the cost of that reification explicit.

\subsection{Reification conventions}
\label{app:reif:conventions}
We fix the locator set $\mathcal{L}$, rescuer set $\mathcal{R}$, and
emergency-site set $\mathcal{M}$ at design time, and associate one trace
variable with each corresponding entity:
$\pi_\ell$ for $\ell \in \mathcal{L}$, $\pi_r$ for $r \in \mathcal{R}$,
and $\pi_m$ for $m \in \mathcal{M}$. Emergency activation is represented by
$\varphi^{\mathrm{emg},\pi_m}$, while
$\mathsf{task}^{\pi_\ell,\pi_r,\pi_m}$ is true when locator $\ell$ assigns
rescuer $r$ specifically to emergency site $m$. The
predicates $\varphi^{\mathrm{emg}}_m$, $\varphi^{\mathrm{sense}}_m$,
$\varphi^{\mathrm{LL}}_\ell$, $\varphi^{\mathrm{LR}}_\ell$,
$\varphi^{\mathrm{near}}_{r,m}$,
$\varphi^{\mathrm{carry}}_r$, and $\varphi^{\mathrm{atC}}_r$ are tagged
with the trace(s) on which they are evaluated.

We consider two reification strategies. The usual approach
expands every inner agent-level existential into a finite disjunction
over the fixed agent universe, keeping the quantifier prefix
purely universal. The \emph{Skolemized} strategy lifts inner
existentials to outer trace quantifiers, exposing the witness traces
explicitly at the cost of one quantifier alternation per lifted
existential.

\subsection{Reified HyperLTL specifications}
\label{app:reif:specs}

\paragraph{Bounded emergency detection}
The original STL-GO formula contains no inner existential, so the
idiomatic and Skolemized reifications coincide:
\begin{align}
  \label{eq:hyper:detect}
  \phi_{\mathrm{detect}}^{\mathrm{HL}}
  \;:=\;
  &\forall \pi_m.\;
  \Globally\Big(
    \varphi^{\mathrm{emg},\pi_m}
    \Rightarrow \notag \\
  &\quad
    \Eventually_{[0,T_{\mathrm{det}}]}\,
    \varphi^{\mathrm{sense},\pi_m}
  \Big).
\end{align}
The finite conjunction $\bigwedge_{m \in \mathcal{M}}$ becomes the universal
trace quantifier $\forall \pi_m$.

\paragraph{Detection-to-relay}
The inner existential $\exists \ell \in \mathcal{L}$ must be reified.
The idiomatic form expands it as a disjunction over $\mathcal{L}$:
\begin{align}
  \label{eq:hyper:relay:idio}
  \phi_{\mathrm{relay}}^{\mathrm{idio}}
  \;:=\;
  &\forall \pi_m.\;
  \Globally\Big(
    \varphi^{\mathrm{emg},\pi_m}\wedge
    \varphi^{\mathrm{sense},\pi_m}
    \Rightarrow \notag \\
  &\quad
    \Eventually_{[0,T_{\mathrm{relay}}]}
    \bigvee_{\ell \in \mathcal{L}}
    (\varphi^{\mathrm{LL},\pi_\ell} \vee \varphi^{\mathrm{LR},\pi_\ell})
  \Big).
\end{align}
The Skolemized form lifts the locator existential to the outer prefix,
introducing one quantifier alternation:
\begin{align}
  \label{eq:hyper:relay:skol}
  \phi_{\mathrm{relay}}^{\mathrm{Skol}}
  \;:=\;
  &\forall \pi_m.\; \exists \pi_\ell.\;
  \Globally\Big(
    \varphi^{\mathrm{emg},\pi_m}\wedge
    \varphi^{\mathrm{sense},\pi_m}
    \Rightarrow \notag \\
  &\quad
    \Eventually_{[0,T_{\mathrm{relay}}]}
    (\varphi^{\mathrm{LL},\pi_\ell} \vee \varphi^{\mathrm{LR},\pi_\ell})
  \Big).
\end{align}
The two reifications are not semantically equivalent: the original
STL-GO formula admits a witness locator that may depend on both $m$ and
the temporal instant $t$, whereas \eqref{eq:hyper:relay:skol} forces a
single witness locator trace per emergency for the entire horizon, and
\eqref{eq:hyper:relay:idio} is faithful only because the disjunction is
re-evaluated at each instant. Capturing the original semantics exactly
would require either the disjunctive form
\eqref{eq:hyper:relay:idio} or a Skolemization in which the witness
varies with $t$, which HyperLTL cannot express without further auxiliary
machinery.

\paragraph{Detection-to-assignment}
The reifications follow the same pattern. Idiomatic:
\begin{align}
  \label{eq:hyper:assign:idio}
  \phi_{\mathrm{assign}}^{\mathrm{idio}}
  \;:=\;
  &\forall \pi_m.\;
  \Globally\Big(
    \varphi^{\mathrm{emg},\pi_m}\wedge
    \varphi^{\mathrm{sense},\pi_m}
    \Rightarrow \notag \\
  &\quad
    \Eventually_{[0,T_{\mathrm{assign}}]}
    \bigvee_{\ell \in \mathcal{L}}
    \bigvee_{r \in \mathcal{R}}
    \mathsf{task}^{\pi_\ell,\pi_r,\pi_m}
  \Big).
\end{align}
Skolemized:
\begin{align}
  \label{eq:hyper:assign:skol}
  \phi_{\mathrm{assign}}^{\mathrm{Skol}}
  \;:=\;
  &\forall \pi_m.\; \exists \pi_\ell.\;\exists \pi_r.\;
  \Globally\Big(
    \varphi^{\mathrm{emg},\pi_m}\wedge
    \varphi^{\mathrm{sense},\pi_m}
    \Rightarrow \notag \\
  &\quad
    \Eventually_{[0,T_{\mathrm{assign}}]}\,
    \mathsf{task}^{\pi_\ell,\pi_r,\pi_m}
  \Big).
\end{align}

\paragraph{Rescuer response time}
The fixed conjunctions over locators, rescuers, and emergency sites become
universal trace quantifiers. The emergency-specific assignment predicate
preserves the association between the assigned rescuer and site:
\begin{align}
  \label{eq:hyper:reach}
  \phi_{\mathrm{reach}}^{\mathrm{HL}}
  \;:=\;
  &\forall\pi_\ell.\,\forall\pi_r.\,\forall\pi_m.\;
  \Globally\Big(
    \mathsf{task}^{\pi_\ell,\pi_r,\pi_m}
    \wedge \varphi^{\mathrm{emg},\pi_m}
    \Rightarrow \notag \\
  &\quad
    \Eventually_{[0,T_{\mathrm{reach}}]}
    \varphi^{\mathrm{near},\pi_r,\pi_m}
  \Big).
\end{align}

\paragraph{Rescue and deliver}
The same universal prefix gives the direct reification:
\begin{align}
  \label{eq:hyper:deliver}
  \phi_{\mathrm{deliver}}^{\mathrm{HL}}
  \;:=\;
  &\forall\pi_\ell.\,\forall\pi_r.\,\forall\pi_m.\;
  \Globally\bigg(
    \mathsf{task}^{\pi_\ell,\pi_r,\pi_m}
    \wedge \varphi^{\mathrm{near},\pi_r,\pi_m}
    \Rightarrow \notag \\
  &\quad
      \Eventually_{[0,T_{\mathrm{deliver}}]}\!
      (\varphi^{\mathrm{carry},\pi_r} \wedge \varphi^{\mathrm{atC},\pi_r})
  \bigg).
\end{align}

\subsection{Sources of the succinctness loss}
\label{app:reif:cost}

Two principal structural costs are imposed by reification, and both are visible
in the formulas above.

First, every pointwise agent choice in STL-GO---whether expressed by $\EX$
over $\nodes$ or by a finite role-restricted disjunction---becomes either a
finite disjunction whose size scales with the corresponding agent universe
or an outer trace quantifier that raises the alternation
depth. In the disjunctive form, the propositional matrix grows as
$\Theta(|\mathcal{L}|)$ for $\phi_{\mathrm{relay}}$ and
$\Theta(|\mathcal{L}|\,|\mathcal{R}|)$ for $\phi_{\mathrm{assign}}$.
The role-restricted choices in both formulas remain inside the temporal
operator, preserving their pointwise dependence on the time and emergency;
the assignment formula also preserves the emergency-specific locator--rescuer
association. In the
Skolemized form, the matrix remains compact but the prefix acquires a
universal--existential alternation. By contrast, the fixed conjunctions
in $\phi_{\mathrm{reach}}$ and $\phi_{\mathrm{deliver}}$ translate to
universal trace prefixes and introduce no alternation.

Second, joint predicates such as $\varphi^{\mathrm{near}}_{r,m}$ and
$\varphi^{\mathrm{task}}_{\ell,r,m}$ that
depend on the states of multiple agents map naturally onto STL-GO via
predicate functions over agent-state tuples at the outer level. In
HyperLTL they require atomic propositions tagged with multiple path
variables, written above as $\varphi^{\mathrm{near},\pi_r,\pi_m}$ and
$\mathsf{task}^{\pi_\ell,\pi_r,\pi_m}$. This is a non-standard extension;
in strict HyperLTL one must either replicate each predicate across the
participating traces with consistency constraints or
precompute it as a Boolean signal on a designated system trace, in either
case adding further auxiliary atomic propositions.

Taken together, the reification yields HyperLTL formulas whose
propositional matrix scales linearly with the locator universe for relay and
with the locator--rescuer product for assignment (idiomatic form), or whose quantifier prefixes acquire an
alternation (Skolemized form). The remaining formulas stay universally
quantified but retain the multi-trace predicate overhead.

\subsection{Experimental comparison with HyperLTL}
\label{sec:app-exp-hyperltl}
We evaluate STL-GO against a HyperLTL baseline on the wildfire-rescue
grid-world benchmark of HypRL~\cite{hsu2025hyprl}. The environment is an
$\mathcal{N} \times \mathcal{N}$ grid. Two heterogeneous agents, a firefighter
(FF) and a medical responder (Med), are deployed from a shared initial
cell in the top-left corner. Let $\mathcal F_{\mathcal N}$ and
$\mathcal V_{\mathcal N}$ denote, respectively, the fire and victim cells for
an $\mathcal N\times\mathcal N$ instance; their cardinalities scale with
$\mathcal N$, with fire cells placed along the anti-diagonal and victim cells
along the middle row.

The mission is governed by four sub-specifications. Objectives
\textbf{O1} and \textbf{O2} require FF to eventually visit every fire
zone (extinguishing it) and Med to eventually visit every victim cell respectively.
Coordination constraint \textbf{C1} enforces that the agents remain
within Manhattan communication range $\mathcal N-1$ at all times. Safety
constraint \textbf{C2} imposes a temporal precedence: Med may not enter
any fire zone until FF has already visited that cell. The HyperLTL
specification uses a $\forall \pi_{FF}.\, \exists \pi_{Med}$ prefix and
joint multi-trace propositions: 

\begin{align*}
\phi_{\mathrm{Rescue}}
&\;:=\;
\forall \pi_{FF}.\exists \pi_{Med}.\,
(
\psi_{\mathrm{fire}}
\wedge
\psi_{\mathrm{save}}
\wedge
\psi_{\mathrm{dist}}
\wedge
\psi_{\mathrm{safe}}
)
\\
\textbf{O1: }\;
\psi_{\mathrm{fire}} &\;:=\;
\bigwedge_{q\in\mathcal F_{\mathcal N}} \Eventually(q^{\pi_{FF}}) \\
\textbf{O2: }\;
\psi_{\mathrm{save}} &\;:=\;
\bigwedge_{q\in\mathcal V_{\mathcal N}} \Eventually(q^{\pi_{Med}})\\
\textbf{C1: }\;
\psi_{\mathrm{dist}} &\;:=\;
\Globally \left( \left\|
\mathrm{Location}^{\pi_{FF}} - \mathrm{Location}^{\pi_{Med}} \right\|_1 < \mathcal N
\right)\\
\textbf{C2: }\;
\psi_{\mathrm{safe}} &\;:=\;
\bigwedge_{q\in\mathcal F_{\mathcal N}}
(\neg q^{\pi_{Med}} \;\Until\; q^{\pi_{FF}})
\end{align*}
The equivalent STL-GO specification captures the same mission using
STL-GO operators: a shared fire-status state variable collapses the
per-cell until of \textbf{C2} into the constraint
$\mathrm{loc}[\mathrm{Med},q,t]+\mathrm{fire}[q,t]\le 1$ for each
$q\in\mathcal F_{\mathcal N}$, and the singleton communication-graph
collection $\{\graph^c_t\}$ with the operator
$\Outop^{\exists}_{\{\graph^c\}, [1, \infty)} \top$ at FF
enforces \textbf{C1} via a single distance constraint per timestep.



\end{document}